\newtheorem{theorem}{Theorem}
\newtheorem{lemma}{Lemma}
\theoremstyle{definition}
\newtheorem{definition}{Definition}
\newtheorem{remark}{Remark}
\newtheorem{assumption}{Assumption}
\newtheorem{problem}{Problem}
\begin{document}

\title{A Nonlinear Model Predictive Control Scheme for Cooperative Manipulation with Singularity and Collision Avoidance}

\author{Alexandros Nikou, Christos Verginis, Shahab Heshmati-alamdari and Dimos V. Dimarogonas
\thanks{Alexandros Nikou, Christos Verginis and Dimos V. Dimarogonas are with the ACCESS Linnaeus Center, School of Electrical Engineering, KTH Royal Institute of Technology, SE-100 44, Stockholm, Sweden and with the KTH Center for Autonomous Systems. Email: {\tt\small \{anikou, cverginis, dimos\}@kth.se}. Shahab  Heshmati-alamdari is with the Control Systems Lab, Department of Mechanical Engineering, National Technical University of Athens, 9 Heroon Polytechniou Street, Zografou 15780, Athens, Greece. Email: {\tt\small \{shahab\}@mail.ntua.gr}. This work was supported by the H2020 ERC Starting Grant BUCOPHSYS, the EU H2020 AEROWORKS project, the EU H2020 Co4Robots project, the Swedish Foundation for Strategic Research (SSF), the Swedish Research Council (VR) and the Knut och Alice Wallenberg Foundation (KAW).}}



\maketitle

\begin{abstract}
This paper addresses the problem of cooperative transportation of an object rigidly grasped by $N$ robotic agents. In particular, we propose a Nonlinear Model Predictive Control (NMPC) scheme that guarantees the navigation of the object to a desired pose in a bounded workspace with obstacles, while complying with certain input saturations of the agents. Moreover, the proposed methodology ensures that the agents do not collide with each other or with the workspace obstacles as well as that they do not pass through singular configurations. The feasibility and convergence analysis of the NMPC are explicitly provided. Finally, simulation results illustrate the validity and efficiency of the proposed method. 
\end{abstract}

\begin{IEEEkeywords}
Multi-Agent Systems, Cooperative control, Cooperative Manipulation, Nonlinear Model Predictive Control, Collision Avoidance.
\end{IEEEkeywords}

%
\IEEEpeerreviewmaketitle

\section{Introduction}

Over the last years, multi-agent systems have gained a significant amount of attention, due to the advantages they offer with respect to single-agent setups. In the case of robotic manipulation and object transportation, difficult tasks involving heavy payloads as well as challenging maneuvers necessitate the employment of multiple robots. Fig. 1 depicts a system of two robotic mobile manipulators (KUKA youBots), each comprising of a moving base and a robotic arm of 5 Degrees of Freedom (DOF). 

Early works related to cooperative manipulation develop control architectures where the robotic agents communicate and share information with each other as well as completely decentralized schemes, where each agent uses only local information or observers, avoiding potential communication delays \cite{schneider1992object,liu1996decentralized,liu1998decentralized,zribi1992adaptive,khatib1996decentralized,caccavale2000task,gudino2004control}. Impedance and force/motion control constitutes the most common methodology used in the related literature \cite{schneider1992object,caccavale2008six,heck2013internal,erhart2013adaptive,erhart2013impedance,szewczyk2002planning,tsiamis2015cooperative,ficuciello2014cartesian,ponce2016cooperative,gueaieb2007robust}. However, most of the aforementioned works employ force/torque sensors to acquire knowledge of the manipulator-object contact forces/torques, which, however, may result to performance decline due to sensor noise or mounting difficulties. Recent technological advances allow to manipulator grippers to grasp rigidly certain objects (see e.g., \cite{grasping2014}), which, as shown in this work, can render the use of force/torque sensors unnecessary. 

Furthermore, in manipulation tasks, such as pose/force or trajectory tracking, collision with obstacles of the environment has been dealt with only by exploiting the extra degrees of freedom that appear in over-actuated robotic agents. Potential field-based algorithms may suffer from local minima and navigation functions \cite{koditschek1990robot} cannot be extended to multi-agent second order dynamical systems in a trivial way. Moreover, these methods usually result in high control input values near obstacles that need to be avoided, which might conflict the saturation of the actual motor inputs.

\begin{figure}
	\centering
	\includegraphics[width = 0.4\textwidth]{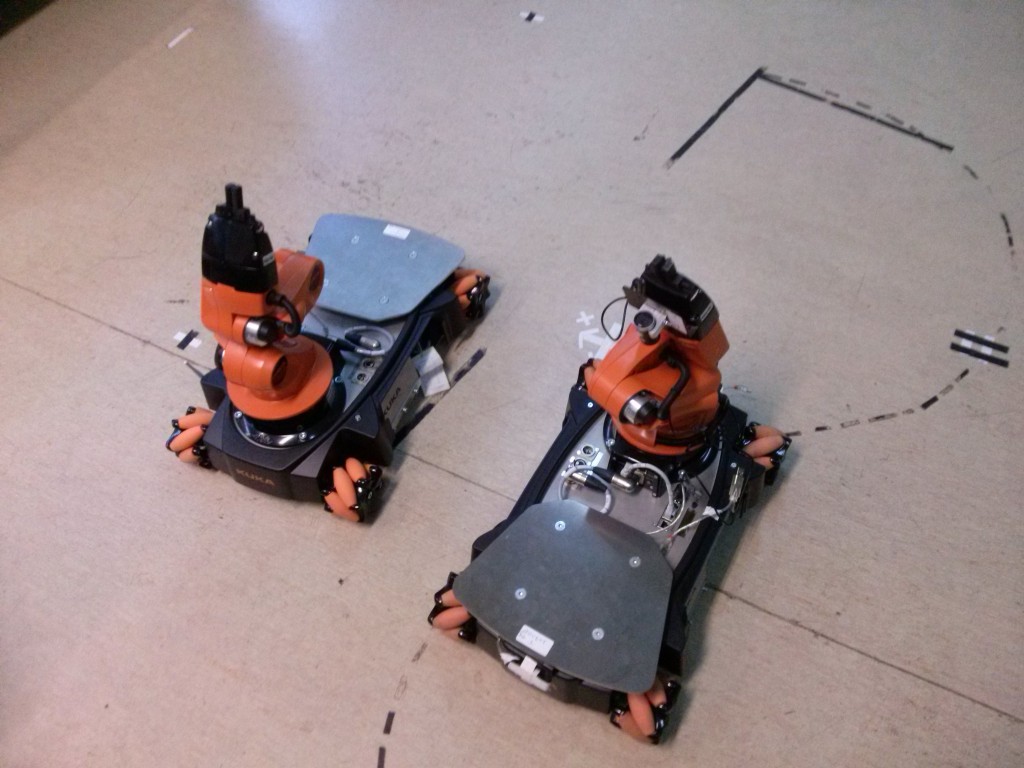}
	
	\caption{Two ground vehicles (KUKA youBots) consisting of a moving base and a attached manipulator with $5$ DOF.\label{fig:youbots}}
\end{figure} 

Another important property that concerns robotic manipulators is the singularities of the Jacobian matrix, which maps the joint velocities of the agent to a $6$D vector of generalized velocities. Such \textit{singular} \textit{kinematic} configurations, that indicate directions towards which the agent cannot move, must be always avoided, especially when dealing with task-space control in the end-effector \cite{siciliano}. In the same vein, \textit{representation} singularities can also occur in the mapping from coordinate rates to angular velocities of a rigid body. 

In this work, we aim to address the problem of cooperative manipulation of an object in a bounded workspace with obstacles. In particular, given $N$ agents that rigidly grasp an object, we design control inputs for the navigation of the object to a final pose, while avoiding inter-agent collisions as well as collisions with obstacles. Moreover, we take into account constraints that emanate from control input saturation as well kinematic and representation singularities.

For the design of a stabilizing feedback control law for each robot, such that the desired specifications are met, while satisfying constraints on the controls and the states, one would ideally look for a closed loop solution for the feedback law satisfying the constraints while optimizing the performance. However, typically the optimal feedback law cannot be found analytically, even in the unconstrained case, since it involves the solution of the corresponding Hamilton-Jacobi-Bellman partial differential equations. One approach to circumvent this problem is the repeated solution of an open-loop optimal control problem for a given state. The first part of the resulting open-loop input signal is implemented and the whole process is repeated. Control approaches using this strategy are referred to as Nonlinear Model Predictive Control (NMPC) (see e.g. \cite{morrari_npmpc, frank_2003_nmpc_bible, frank_1998_quasi_infinite, frank_2003_towards_sampled-data-nmpc, fontes_2001_nmpc_stability, grune_2011_nonlinear_mpc, camacho_2007_nmpc, cannon_2001_nmpc, borrelli_2013_nmpc, fontes_2007_modified_barbalat, alex_med_2017, alex_cdc_2017_timed_abstractions, ECC_2017_arxiv_dnmpc, alex_IJC_2017, alex_chris_ECC_2018}) which we aim to use in this work for the problem of the constraint cooperative manipulation of an object which is rigidly grasped by $N$ agents. To the best of the authors' knowledge, this problem has not been addressed in the related literature.

The remainder of the paper is structured as follows. Section \ref{sec:preliminaries} provides preliminary background. The system dynamics and the formal problem statement are given in Section \ref{sec:Problem-Formulation}. Section \ref{sec:solution} discusses the technical details of the solution and Section \ref{sec:simulation_results} is devoted to a simulation example. Finally, conclusions and future work are discussed in Section \ref{sec:conclusions}.

\section{Notation and Preliminaries} \label{sec:preliminaries}

The set of positive integers is denoted as $\mathbb{N}$ and the real $n$-coordinate space, with $n\in\mathbb{N}$, as $\mathbb{R}^n$;
$\mathbb{R}^n_{\geq 0}$ and $\mathbb{R}^n_{> 0}$ are the sets of real $n$-vectors with all elements nonnegative and positive, respectively. The notation $\mathbb{R}^{n\times n}_{\geq 0}$ and $\mathbb{R}^{n\times n}_{> 0}$, with $n\in\mathbb{N}$, stands for positive semi-definite and positive definite matrices, respectively. Moreover, $\lVert x \rVert$ is the Euclidean norm of a vector $x\in\mathbb{R}^n$. Given a set $S$, we denote by $|S|$ its cardinality and by $S^N = S \times \dots \times S$ its $N$-fold Cartesian product. Given the sets $S_1, S_2$, the \emph{set difference} and the \emph{Minkowski addition} are denoted by $\backslash, \oplus$, respectively, and are defined by $S_1 \backslash S_2 = \{s: s\in S_1 \ \text{and} \ s_2 \notin S_2\}$ and $S_1 \oplus S_2 = \{s_1 + s_2 : s_1 \in S_1, s_2 \in S_2\}$, respectively. The $n\times n$ identity matrix and the $n\times m$ matrix with zero entries, are denoted by $I_n$, $0_{n\times m}$ and $\mathbbm{1}_{n}$, respectively, with $n,m \in \mathbb{N}$. The largest singular value of matrix $A \in \mathbb{R}^{n \times m}$ is denoted as $\sigma_{\max}(A)$.

The vector connecting the origins of coordinate frames $\{A\}$ and $\{B$\} expressed in frame $\{C\}$ coordinates in $3$-D space is denoted as $p^{\scriptscriptstyle C}_{\scriptscriptstyle B/A} = [x_{\scriptscriptstyle B/A}, y_{\scriptscriptstyle B/A}, z_{\scriptscriptstyle B/A}]^\top \in\mathbb{R}^3$. Given $a\in\mathbb{R}^3, S(a)$ is the skew-symmetric matrix defined according to $S(a)b = a\times b$. We further denote as $\eta_{\scriptscriptstyle A/B} = [\phi_{\scriptscriptstyle A/B}, \theta_{\scriptscriptstyle A/B}, \psi_{\scriptscriptstyle A/B}]^\top\in\mathbb{T}^3\subseteq \mathbb{R}^3$ the $x$-$y$-$z$ Euler angles representing the orientation of frame $\{A\}$ with respect to frame $\{B\}$, with $\phi_{\scriptscriptstyle A/B}, \psi_{\scriptscriptstyle A/B}\in[-\pi,\pi]$ and $\theta_{\scriptscriptstyle A/B}\in[-\tfrac{\pi}{2}, \tfrac{\pi}{2}]$, where $\mathbb{T}^3$ is the $3$-D torus; Moreover, $R^{\scriptscriptstyle B}_{\scriptscriptstyle A}\in SO(3)$ is the rotation matrix associated with the same orientation and $SO(3)$ is the $3$-D rotation group.
The angular velocity of frame $\{B\}$ with respect to $\{A\}$, expressed in frame $\{C\}$ coordinates, is
denoted as $\omega^{\scriptscriptstyle C}_{{\scriptscriptstyle B/A}}\in \mathbb{R}^{3}$ and it holds that $\dot{R}^{\scriptscriptstyle B}_{\scriptscriptstyle A} = S(\omega^{\scriptscriptstyle A}_{\scriptscriptstyle B/A})R^{\scriptscriptstyle B}_{\scriptscriptstyle A}$. We further define the sets $\mathbb{M} = \mathbb{R}^3\times\mathbb{T}^3$, $\mathcal{N} = \{1,\dots,N\}$. We define also the set
\begin{align*} \label{eq:elipsoid_eq}
	\mathcal{O}_z &\triangleq \mathcal{O}(c_z, \beta_{1, z},\beta_{2, z}, \beta_{3, z}) \notag \\
	&= \left\{ p \in \mathbb{R}^3 : (p-c_z)^\top P (p-c_z) \le 1 \right\},
\end{align*}
as the set of an \emph{ellipsoid} in 3D, where $c_z \in \mathbb{R}^3$ is the center of the ellipsoid, $\beta_{1, z}, \beta_{2, z}, \beta_{3, z} \in \mathbb{R}_{> 0}$ the lengths of its three semi-axes and $z \ge 1$ is an index term. The eigenvector of matrix $P \in \mathbb{R}^3$ define the principal axes of the ellipsoid, and the eigenvalues of $P$ are: $\beta_{1, z}^{-2}, \beta_{2, z}^{-2}$ and $\beta_{3, z}^{-2}$.  For notational brevity, when a coordinate frame corresponds to an inertial frame of reference $\left\{I\right\}$, we will omit its explicit notation (e.g., $p_{\scriptscriptstyle B} = p^{\scriptscriptstyle I}_{\scriptscriptstyle B/I}, \omega_{\scriptscriptstyle B} = \omega^{\scriptscriptstyle I}_{\scriptscriptstyle B/I}, R_{\scriptscriptstyle A} = R^{\scriptscriptstyle I}_{\scriptscriptstyle A} $, etc.). Finally, all vector and matrix differentiations will be with respect to an inertial frame $\{I\}$, unless otherwise stated.

\begin{definition} \label{def:class_K}
	(\cite{khalil_nonlinear_systems}) A continuous function $f: [0, \alpha] \to \mathbb{R}_{\ge 0}, \alpha \in \mathbb{R}_{> 0}$ is said to belong to \emph{class} $\mathcal{K}$, if is strictly increasing and $f(0) = 0$.
\end{definition}

\begin{lemma} (\cite{michalska_1994_barbalat})
	Let $\gamma$ be a continuous, positive definite function and $x$ be an absolutely continuous function on $\mathbb{R}$. If the following holds:
	\begin{itemize}
		\item $\|x(\cdot)\| < \infty, \|\dot{x}(\cdot)\| < \infty$,
		\item $\displaystyle \lim_{t \to \infty} \int_{0}^{t} \gamma(x(s)) ds < \infty$.
	\end{itemize}
	Then, it holds that: $\lim_{t \to \infty} \|x(t)\| = 0$.
\end{lemma}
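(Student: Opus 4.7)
The plan is to reduce the claim to a standard Barbalat argument applied to the composition $\gamma \circ x$, and then use positive definiteness and continuity of $\gamma$ together with boundedness of $x$ to pass from $\gamma(x(t)) \to 0$ to $x(t) \to 0$. First I would note that since $\|\dot{x}(\cdot)\|$ is bounded, the function $x$ is Lipschitz and hence uniformly continuous on $\mathbb{R}_{\ge 0}$, and its image $\{x(t) : t \ge 0\}$ is contained in a bounded, hence relatively compact, subset $K \subset \mathbb{R}^n$. On the compact closure $\overline{K}$ the continuous function $\gamma$ is uniformly continuous, so the composition $t \mapsto \gamma(x(t))$ is uniformly continuous on $\mathbb{R}_{\ge 0}$.

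Next I would invoke the classical Barbalat lemma: if a uniformly continuous scalar function $\varphi : \mathbb{R}_{\ge 0} \to \mathbb{R}$ admits a finite limit $\lim_{t \to \infty} \int_0^t \varphi(s)\,ds$, then $\varphi(t) \to 0$. Applied to $\varphi(t) = \gamma(x(t))$ this gives $\lim_{t \to \infty} \gamma(x(t)) = 0$, using the second hypothesis directly and the uniform continuity established in the previous step.

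Finally I would convert this into $\|x(t)\| \to 0$ by a standard compactness/contradiction argument that relies on positive definiteness of $\gamma$. Suppose for contradiction that $x(t)$ does not tend to the origin; then there exist $\varepsilon > 0$ and a sequence $t_k \to \infty$ with $\|x(t_k)\| \ge \varepsilon$ for all $k$. Since the sequence $\{x(t_k)\}$ lies in the compact set $\overline{K}$, we may extract a convergent subsequence $x(t_{k_j}) \to x^\star$ with $\|x^\star\| \ge \varepsilon > 0$. By continuity of $\gamma$ and by positive definiteness, $\gamma(x(t_{k_j})) \to \gamma(x^\star) > 0$, contradicting $\gamma(x(t)) \to 0$. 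Hence $\lim_{t \to \infty} \|x(t)\| = 0$.

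The only step that requires care is the last one: positive definiteness alone does not give a uniform lower bound $\gamma(x) \ge \alpha(\|x\|)$ for some class $\mathcal{K}$ function, so I avoid that route and instead use boundedness of $x$ to obtain compactness and then extract a limit point. The rest of the argument is routine, and the two boundedness hypotheses on $x$ and $\dot x$ are used in exactly two places, namely to secure uniform continuity of $\gamma \circ x$ and to produce the convergent subsequence in the final contradiction.
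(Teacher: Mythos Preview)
Your argument is correct and is the standard reduction to Barbalat's lemma followed by a compactness/positive-definiteness contradiction. Note, however, that the paper does not supply its own proof of this lemma: it is stated with a citation to \cite{michalska_1994_barbalat} and used as a black box in the convergence analysis, so there is no in-paper proof to compare against. Your proof is exactly the kind of argument that underlies the cited result.
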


\section{Problem Formulation}
\label{sec:Problem-Formulation}

Consider a bounded and convex workspace $\mathcal{W} \subseteq \mathbb{R}^{3}$ consisting of $N$ robotic agents rigidly grasping an object, as shown in Fig. \ref{fig:Two-robotic-arms}, and $Z$ obstacles described by the ellipsoids $\mathcal{O}_z, z \in \mathcal{Z} = \{1, \dots, Z\}$. The free space is denoted as $\mathcal{W}_{\text{free}} = \mathcal{W}\backslash \bigcup_{z \in \mathcal{Z}} \mathcal{O}_z$. The agents are considered to be fully actuated and they consist of a base that is able to move around the workspace (e.g., mobile or aerial vehicle) and a robotic arm.
The reference frames corresponding to the $i$-th end-effector and the object's center of
mass are denoted with $\left\{ E_{i}\right\} $ and $\left\{ O\right\} $,
respectively, whereas $\left\{ I\right\} $ corresponds to an inertial
reference frame. The rigidity of the grasps implies that the agents can exert any forces/torques along every direction to the object. 
We consider that each agent $i$ knows the position and velocity only of its own state as well as its own and the object's geometric parameters. Moreover, no interaction force/torque measurements or on-line communication is required.

\begin{figure}
	\centering
	\includegraphics[width = 0.4\textwidth]{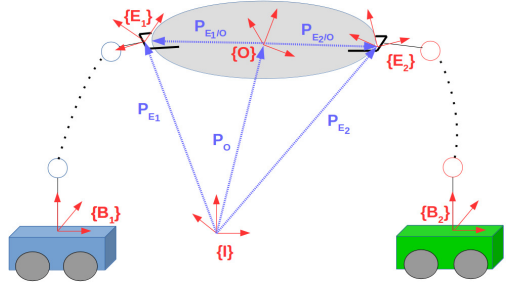}
	
	\caption{Two robotic arms rigidly grasping an object with the corresponding frames.\label{fig:Two-robotic-arms}}
\end{figure} 

\subsection{System model}
\label{subsec:system_model}

\subsubsection{Robotic Agents} \label{subsubsec: agent dynamics}

We denote by $q_i:\mathbb{R}_{\geq 0}\to\mathbb{R}^{n_i}$ the joint space variables of agent $i\in\mathcal{N}$, with $n_i = n_{\alpha_i} + 6$, $q_i(t) = [p^\top_{\scriptscriptstyle B_i}(t),\eta^\top_{\scriptscriptstyle B_i}(t), \alpha^\top_i(t)]^\top$, where $p_{\scriptscriptstyle B_i} = [x_{\scriptscriptstyle B_i}, y_{\scriptscriptstyle B_i}, z_{\scriptscriptstyle B_i}]^\top:\mathbb{R}_{\geq 0} \to\mathbb{R}^{3},\eta_{\scriptscriptstyle B_i} = [\phi_{\scriptscriptstyle B_i}, \theta_{\scriptscriptstyle B_i}, \psi_{\scriptscriptstyle B_i}]^\top:\mathbb{R}_{\geq 0}\to\mathbb{T}^{3}\subseteq \mathbb{R}^{3}$ is the position and Euler-angle orientation of the agent's base, and $\alpha_i:\mathbb{R}_{\geq 0}\to\mathbb{R}^{n_{\alpha_i}},n_{\alpha_i}>0$, are the degrees of freedom of the robotic arm. The overall joint space configuration vector is denoted as $q = [q^\top_1,\dots,q^\top_N]^\top\in\mathbb{R}^n$, with $n = \sum_{i\in\mathcal{N}}n_i$.
In addition, we denote as $p_{\scriptscriptstyle E_i}:\mathbb{R}^{n_i}\to\mathbb{R}^3,\eta_{\scriptscriptstyle E_i}:\mathbb{R}^{n_i}\to\mathbb{T}^3\subseteq \mathbb{R}^3$ the position and Euler-angle orientation of agent $i$'s end-effector. Let also $v_i:\mathbb{R}^{n_i}\times\mathbb{R}^{n_i}\rightarrow\mathbb{R}^{6}$ denote the velocity of agent $i$'s end-effector, with $v_i(q_i,\dot{q}_i) = [\dot{p}^\top_{\scriptscriptstyle E_i}, \omega^\top_{\scriptscriptstyle E_i}]^\top$, whereas $\dot{p}_{\scriptscriptstyle B_i}, \omega_{\scriptscriptstyle B_i}:\mathbb{R}^{n_i}\times\mathbb{R}^{n_i}\to\mathbb{R}^{3}$ are the linear and angular velocity, respectively, of the agent's base.

We consider that each agent $i\in\mathcal{N}$ has access to its own state $q_i$ as well as $\dot{p}^{\scriptscriptstyle B_i}_{\scriptscriptstyle B_i}, \omega^{\scriptscriptstyle B_i}_{\scriptscriptstyle B_i}$, and $\dot{\alpha}_i$ via on-board sensors. Then, $\dot{p}_{\scriptscriptstyle B_i}, \omega_{\scriptscriptstyle B_i}$ can be obtained via $\dot{p}_{\scriptscriptstyle B_i} = R_{B_i}(\eta_{B_i})\dot{p}^{\scriptscriptstyle B_i}_{\scriptscriptstyle B_i}$, $\omega_{\scriptscriptstyle B_i} = R_{\scriptscriptstyle B_i}(\eta_{\scriptscriptstyle B_i})\omega^{\scriptscriptstyle B_i}_{\scriptscriptstyle B_i}$, where $R_{\scriptscriptstyle B_i}:\mathbb{T}^3\to SO(3)$ is the rotation matrix of the agent $i$'s base.
Moreover, $\dot{\eta}_{\scriptscriptstyle B_i}$ is related to $\omega_{\scriptscriptstyle B_i}$ via $\omega_{\scriptscriptstyle B_i} = J_{\scriptscriptstyle B_i}(\eta_{\scriptscriptstyle B_i})\dot{\eta}_{\scriptscriptstyle B_i}$, where $J_{\scriptscriptstyle B_i}:\mathbb{T}^3\to\mathbb{R}^{3\times3}$, with 
\begin{equation}
J_{\scriptscriptstyle B_i}(\eta_{\scriptscriptstyle B_i}) = \begin{bmatrix}
1 & 0 & \sin(\theta_{\scriptscriptstyle B_i}) \\
0 & \cos(\phi_{\scriptscriptstyle B_i}) & -\cos(\theta_{\scriptscriptstyle B_i})\sin(\phi_{\scriptscriptstyle B_i}) \\
0 & \sin(\phi_{\scriptscriptstyle B_i}) & \cos(\theta_{\scriptscriptstyle B_i})\cos(\phi_{\scriptscriptstyle B_i})
\end{bmatrix}. \notag
\end{equation}
The pose of the $i$th end-effector can be computed via 
\begin{align}
p_{\scriptscriptstyle E_i}(q_i) &= p_{\scriptscriptstyle B_i} + R_{\scriptscriptstyle B_i}(\eta_{\scriptscriptstyle B_i})k_{p_i}(\alpha_i), \notag \\
\eta_{\scriptscriptstyle E_i}(q_i) &= k_{\eta_i}(\eta_{\scriptscriptstyle B_i},\alpha_i) \notag,
\end{align} 
where $k_{p_i}:\mathbb{R}^{n_{\alpha_i}}\to\mathbb{R}^3, k_{\eta_i}:\mathbb{T}^3\times\mathbb{R}^{n_{\alpha_i}}\to\mathbb{T}^3$ are the forward kinematics of the robotic arm \cite{siciliano}. Then, $v_i$ can be computed as 
\begin{align}
v_i(q_i,\dot{q}_i) &= 
\begin{bmatrix}
\dot{p}_{\scriptscriptstyle E_i}(q_i,\dot{q}_i) \\ 
\omega_{\scriptscriptstyle E_i}(q_i,\dot{q}_i)
\end{bmatrix} \notag \\
&= 
\begin{bmatrix}
\dot{p}_{\scriptscriptstyle B_i} - S(R_{\scriptscriptstyle B_i}k_{p_i})\omega_{\scriptscriptstyle B_i} + R_{\scriptscriptstyle B_i}\tfrac{\partial k_{p_i}}{\partial \alpha_i}\\ 
\omega_{\scriptscriptstyle B_i} + R_{\scriptscriptstyle B_i}J_{\scriptscriptstyle A_i}\dot{\alpha}_i \label{eq:diff_kinematics}
\end{bmatrix},
\end{align}
where $J_{\scriptscriptstyle A_i}:\mathbb{R}^{n_{\alpha_i}}\to\mathbb{R}^{3\times n_{\alpha_i}}$ is the angular Jacobian of the robotic arm with respect to the agent's base. The differential kinematics \eqref{eq:diff_kinematics} can be written as 
\begin{equation}
v_i(q_i,\dot{q}_i) = \begin{bmatrix}
\dot{p}_{\scriptscriptstyle E_i}(q_i,\dot{q}_i) \\ \omega_{\scriptscriptstyle E_i}(q_i,\dot{q}_i)
\end{bmatrix} = J_i(q_i)\dot{q}_i,	\label{eq:diff_kinematics_2}
\end{equation}
where $J_i:\mathbb{R}^{n_i}\to\mathbb{R}^{6\times n_i}$ is the agent Jacobian matrix, with 
\begin{align*}
J_i(q_i) = \begin{bmatrix}
I_3 & -S(R_{\scriptscriptstyle B_i}(\eta_{\scriptscriptstyle B_i})k_{p_i}(\alpha_i))J_{\scriptscriptstyle B_i}(\eta_{\scriptscriptstyle B_i}) & R_{\scriptscriptstyle B_i}(\eta_{\scriptscriptstyle B_i})\tfrac{\partial k_{p_i}(\alpha_i)}{\partial \alpha_i} \\
0_{3\times 3} & J_{\scriptscriptstyle B_i}(\eta_{\scriptscriptstyle B_i}) & R_{\scriptscriptstyle B_i}(\eta_{\scriptscriptstyle B_i})J_{\scriptscriptstyle A_i}(q_i)
\end{bmatrix}.
\end{align*}

\begin{remark}
	Note that $J_{\scriptscriptstyle B_i}$ becomes singular at representation singularities, when $\theta_{\scriptscriptstyle B_i} = \pm \tfrac{\pi}{2}$ and $J_i$ becomes singular at kinematic singularities defined by the set
	\begin{equation*} \label{eq:Q}
	\mathcal{Q}_i = \{q_i\in\mathbb{R}^{n_i} : \det(J^\top_iJ_i) = 0 \}, i \in \mathcal{N}. \notag
	\end{equation*}
	In the following, we will aim at guaranteeing that $q_i$ will always be in the closed set:
	\begin{equation*} \label{eq:Q_tilde}
	\widetilde{\mathcal{Q}}_i = \{q_i\in\mathbb{R}^{n_i} : \lvert\det(J^\top_iJ_i)\rvert \geq \varepsilon > 0\}, i \in \mathcal{N}, \notag
	\end{equation*}
	for a small positive constant $\varepsilon$. 
\end{remark}

The joint-space dynamics for agent $i\in\mathcal{N}$ can be computed using the Lagrangian formulation:
\begin{equation}
B_{i}(q_i)\ddot{q}_i+N_{i}(q_i,\dot{q}_i)\dot{q}_i+g_{q_i}(q_i)  = \tau_{i} - J^\top_i\lambda_{i},   \label{eq:manipulator joint space dynamics}
\end{equation}
where $B_{i}:\mathbb{R}^{n_i}\rightarrow\mathbb{R}^{n_i\times n_i}$ is the joint-space positive definite inertia matrix,  $N_i:\mathbb{R}^{n_i}\times\mathbb{R}^{n_i}\rightarrow\mathbb{R}^{n_i\times n_i}$ represents the joint-space Coriolis matrix, $g_{q_i}:\mathbb{R}^{n_i}\rightarrow\mathbb{R}^{n_i}$
is the joint-space gravity vector, $\lambda_{i}\in\mathbb{R}^{6}$ is the generalized force vector that agent $i$ exerts on the object and $\tau_i\in\mathbb{R}^{n_i}$ is the vector of generalized joint-space inputs, with $\tau_i = [\lambda^\top_{\scriptscriptstyle B_i}, \tau^\top_{\alpha_i}]^\top$, where $\lambda_{\scriptscriptstyle B_i} = [f^\top_{\scriptscriptstyle B_i}, \mu^\top_{\scriptscriptstyle B_i}]^\top \in \mathbb{R}^6$ is the generalized force vector on the center of mass of the agent's base and $\tau_{\alpha_i}\in\mathbb{R}^{n_{\alpha_i}}$ is the torque inputs of the robotic arms' joints.
By inverting \eqref{eq:manipulator joint space dynamics} and using \eqref{eq:diff_kinematics_2} and its derivative, we can obtain the task-space agent dynamics \cite{siciliano}:
\begin{equation}
M_{i}(q_i)\dot{v}_i+C_{i}(q_i,\dot{q}_i)v_i+g_{i}(q_i) 
= u_{i} - \lambda_{i},   \label{eq:manipulator task space dynamics}
\end{equation}
with the corresponding task-space terms:
\begin{align}
M_i(q_i) &= \left[J_i(q_i)B^{-1}_i(q_i)J^\top_i(q_i)\right]^{-1}, \notag \\
C_i(q_i,\dot{q}_i)J_i(q_i)\dot{q}_i &= M_i(q_i)\left[J_i(q_i)B^{-1}_i(q_i)N_i - \dot{J}_i(q_i)\right]\dot{q}_i, \notag \\
g_i(q_i) &=  M_i(q_i)J_i(q_i)B^{-1}_i(q_i)g_{q_i}(q_i). \notag 
\end{align}

The task-space input wrench $u_i$ can be translated to the joint space inputs $\tau_i\in\mathbb{R}^{{n}_i}$ via $\tau_{i}=J_{i}^{\top}(q_i)u_{i}+(I_{{ {n_{i}}}}-J^{\top}_{i}(q_i)\bar{J}^{\top}_{i}(q_i))\tau_{i_{0}}$, where  $\bar{J}_i$ is a generalized inverse of $J_i$ \cite{siciliano}. The term $\tau_{i_{0}}$ concerns over-actuated agents and does not contribute to end-effector forces.

We define by $\mathcal{A}_i(q_i) \triangleq \mathcal{O}_i, i \in \mathcal{N}$,
the ellipsoid that bounds the $i$ th agent's volume with the corresponding centers $c_i$ and semi-axes $\beta_{i, 1}, \beta_{i, 2}, \beta_{i, 3}$, i.e., the workspace of the arm of agent $i$ \cite{siciliano} enlarged so that it includes the $i$th base. Note that $\mathcal{A}_i$ depends on $q_i$ and can be explicitly found.

\subsubsection{Object Dynamics}  \label{subsubsec:object dynamics}

Regarding the object, we denote as $x_{\scriptscriptstyle O}:\mathbb{R}_{\geq 0}\rightarrow\mathbb{M}$, $v_{\scriptscriptstyle O}:\mathbb{R}_{\geq 0}$ $\rightarrow\mathbb{R}^6$ the pose and velocity of the object's center of mass, with $x_{\scriptscriptstyle O}(t) = [p^\top_{\scriptscriptstyle O}(t), \eta^\top_{\scriptscriptstyle O}(t)]^\top$, $p_{\scriptscriptstyle O}(t) = [x_{\scriptscriptstyle O}(t), y_{\scriptscriptstyle O}(t), z_{\scriptscriptstyle O}(t)]^\top$, $\eta_{\scriptscriptstyle O}(t) = [\phi_{\scriptscriptstyle O}(t), \theta_{\scriptscriptstyle O}(t), \psi_{\scriptscriptstyle O}(t)]^\top$ and $v_{\scriptscriptstyle O}(t) = [\dot{p}^\top_{\scriptscriptstyle O}(t), \omega^\top_{\scriptscriptstyle O}(t)]^\top$. The second order dynamics of the object are given by:
\begin{subequations} \label{eq:object dynamics} 
	\begin{align}
	\dot{x}_{\scriptscriptstyle O}(t) & = J^{-1}_{\scriptscriptstyle O_r}(x_{\scriptscriptstyle O})v_{\scriptscriptstyle O}(t), \label{eq:object dynamics_1} \\ 
	\lambda_{\scriptscriptstyle O} & = M_{\scriptscriptstyle O}(x_{\scriptscriptstyle O})\dot{v}_{{\scriptscriptstyle O}}(t)+C_{{\scriptscriptstyle O}}(x_{\scriptscriptstyle O},v_{\scriptscriptstyle O})v_{{\scriptscriptstyle O}}(t)+g_{\scriptscriptstyle O}(x_{\scriptscriptstyle O}),\label{eq:object dynamics_2} 
	\end{align}
\end{subequations}
where $M_{\scriptscriptstyle O}:\mathbb{M}\rightarrow\mathbb{R}^{6\times6}$ is the positive definite inertia matrix, $C_{{\scriptscriptstyle O}}:\mathbb{M}\times\mathbb{R}^6\rightarrow\mathbb{R}^{6\times6}$ is the Coriolis matrix, $g_{\scriptscriptstyle O}:\mathbb{M}\rightarrow\mathbb{R}^{6}$ is the gravity vector, which are derived from the Newton-Euler formulation. In addition, $J_{\scriptscriptstyle O_r}:\mathbb{M}\rightarrow\mathbb{R}^{6\times6}$ is the object representation Jacobian $J_{\scriptscriptstyle O_r}(x_{\scriptscriptstyle O}) = \text{diag}\{I_3, J_{\scriptscriptstyle O_{r,\theta}}(x_{\scriptscriptstyle O}) \}$, with
\begin{equation}
J_{\scriptscriptstyle O_{r,\theta}}(x_{\scriptscriptstyle O}) = \begin{bmatrix}
1 & 0 & \sin(\theta_{\scriptscriptstyle O}) \\
0 & \cos(\phi_{\scriptscriptstyle O}) & -\cos(\theta_{\scriptscriptstyle O})\sin(\phi_{\scriptscriptstyle O}) \\
0 & \sin(\phi_{\scriptscriptstyle O}) & \cos(\theta_{\scriptscriptstyle O})\cos(\phi_{\scriptscriptstyle O})
\end{bmatrix}, \notag
\end{equation}
which is singular when $\theta_{\scriptscriptstyle O}= \pm \tfrac{\pi}{2}$. Finally, $\lambda_{\scriptscriptstyle O}\in\mathbb{R}^6$ is the force vector acting on the object's center of mass. Also, similarly to the robotic agents, we define by $\mathcal{C}_{\scriptscriptstyle O}(x_{\scriptscriptstyle O}) \triangleq \mathcal{O}_{\scriptscriptstyle O}$,
as the bounding ellipsoid of the object.

\subsubsection{Coupled Dynamics} \label{subsubsec: coupled dynamics}

Consider $N$ robotic agents rigidly grasping an object. Then, the coupled system object-agents behaves like a closed-chain robot and we can express the object's pose and velocity as a function of $q_i$ and $\dot{q}_i$, $\forall i\in\mathcal{N}$. 
In view of Fig. \ref{fig:Two-robotic-arms}, we have that 
\begin{subequations} \label{eq:coupled kinematics}
	\begin{align}
	p_{{\scriptscriptstyle E_{i}}}(q_i(t)) &=  p_{{\scriptscriptstyle O}}(t)+ p_{{\scriptscriptstyle E_{i}/O}}(q_i) \notag\\
	&=p_{{\scriptscriptstyle O}}(t)+R_{{\scriptscriptstyle E_i}}(t) p^{\scriptscriptstyle E_i}_{{\scriptscriptstyle E_{i}/O}} \label{eq:coupled kinematics_p},\\ 
	\eta_{\scriptscriptstyle E_i}(q_i(t)) &=  \eta_{\scriptscriptstyle O}(t) +  \eta_{\scriptscriptstyle E_i/O}, \label{eq:coupled kinematics_ksi} 	
	\end{align}
\end{subequations}
$\forall i\in\mathcal{N}$, where $p^{\scriptscriptstyle E_i}_{{\scriptscriptstyle E_{i}/O}}$ represents the constant distance and  $\eta_{\scriptscriptstyle E_i/O}$ the relative orientation offset between the $i$th agent's end-effector and the object's center of mass, which are considered known. 
The grasp rigidity implies that $\omega_{\scriptscriptstyle E_i}$ $=\omega_{\scriptscriptstyle O}$, $\forall i\in\mathcal{N}$. Therefore, by differentiating \eqref{eq:coupled kinematics_p}, we obtain 
\begin{equation}
v_i(q_i,\dot{q}_i(t))=J_{{\scriptscriptstyle O_i}}(q_i)v_{{\scriptscriptstyle O}}(t),\label{eq:object-end-effector jacobian}
\end{equation}
which, by time differentiation, yields
\begin{equation}
\dot{v}_i(t) = J_{{\scriptscriptstyle O_i}}(q_i)\dot{v}_{{\scriptscriptstyle O}}(t) + \dot{J}_{{\scriptscriptstyle O_i}}(q_i) v_{{\scriptscriptstyle O}}(t),\label{eq:object-end-effector jacobian_dot}
\end{equation}
where $J_{\scriptscriptstyle O_i}:\mathbb{R}^{n}\rightarrow\mathbb{R}^{6\times6}$ is a smooth mapping representing the Jacobian from the object to the $i$-th agent:  
\begin{equation*}
J_{{\scriptscriptstyle O_i}}(q_i)=\left[\begin{array}{cc}
I_3 & S(p_{{\scriptscriptstyle O/E_{i}}}(q_i))\\
0_{{\scriptscriptstyle 3\times3}} & I_3
\end{array}\right],
\label{eq:jacobian O_i}
\end{equation*}
and is always full rank due to the grasp rigidity.


\begin{remark}
	Since the geometric object parameters $p^{\scriptscriptstyle E_i}_{\scriptscriptstyle E_i/O}$ and $\eta_{\scriptscriptstyle E_i/O}$ are known, each agent can compute $p_{\scriptscriptstyle O},\eta_{\scriptscriptstyle O}$ and $v_{\scriptscriptstyle O}$ simply by inverting \eqref{eq:coupled kinematics} and \eqref{eq:object-end-effector jacobian}, respectively, without employing any sensory data. In the same vein, all agents can also compute the object's bounding ellipsoid $\mathcal{C}_{\scriptscriptstyle O}$, which depends on $q$.
\end{remark}

The Kineto-statics duality \cite{siciliano} along with the grasp rigidity suggest that the 
force $\lambda_{\scriptscriptstyle O}$ acting on the object center of mass and the generalized forces $\lambda_i, i\in\mathcal{N}$, exerted by the
agents at the contact points are related through 
\begin{equation}
\lambda_{\scriptscriptstyle O}=G^{\top}(q)\bar{\lambda},\label{eq:grasp matrix}
\end{equation}
where $\bar{\lambda}=[\lambda^{\top}_{1}, \cdots, \lambda^{\top}_{N}]^{\top}\in\mathbb{R}^{6N}$ and $G:\mathbb{R}^{{n}}\rightarrow\mathbb{R}^{6N\times6}$ is the grasp matrix, with $G(q)=[J_{{\scriptscriptstyle O_1}}^{\top},\cdots,J_{{\scriptscriptstyle O_N}}^{\top}]^{\top}$. 

Next, we substitute \eqref{eq:object-end-effector jacobian} and \eqref{eq:object-end-effector jacobian_dot} in \eqref{eq:manipulator task space dynamics} and we obtain in vector form after rearranging terms:
\begin{align}
&\bar{\lambda} = u - \bar{M}(q)G(q)\dot{v}_{\scriptscriptstyle O} - (\bar{M}(q)\dot{G}(q,\dot{q}) + \bar{C}(q ,\dot{q})G(q)) v_{\scriptscriptstyle O} -  \bar{g}(q), \label{eq:coupled dynamics 1}
\end{align}
where we have used the stack forms $\bar{M}=  \text{diag}\{\left[M_{i}\right]_{i\in\mathcal{N}}\}$, $\bar{C} = \text{diag}\{\left[C_{i}\right]_{i\in\mathcal{N}}\}$, $\bar{g}=[g_{1}^{\top}, \dots, g_{N}^{\top}]^{\top}$, and $u=[u_{1}^{\top}, \dots, u_{N}^{\top}]^{\top}$.
By substituting \eqref{eq:coupled dynamics 1} and \eqref{eq:object dynamics} in \eqref{eq:grasp matrix} and by noticing from \eqref{eq:coupled kinematics} that $x_{\scriptscriptstyle O}$ depends on $q$ owing to the grasp rigidity, we obtain the coupled dynamics:
\begin{equation}
\widetilde{M}(q)\dot{v}_{{\scriptscriptstyle O}}+\widetilde{C}(q,\dot{q})v_{{\scriptscriptstyle O}}+\widetilde{g}(q) =G^{\top}(q)u,\label{eq:coupled dynamics 2}
\end{equation}
where:
\begin{subequations}
\begin{align}
\widetilde{M}(q) &=  M_{\scriptscriptstyle O}(q)+G^{\top}(q)\bar{M}(q)G(q), \label{eq:coupled terms_M}\\
\widetilde{C}(q,\dot{q})  &=  C_{\scriptscriptstyle O}(q)+G^{\top}(q)\bar{M}(q)\dot{G}(q,\dot{q})+G^{\top}(q)\bar{C}(q)G(q), \\
\widetilde{g}(q) &= g_{\scriptscriptstyle O}(q) + G^{\top}(q)\bar{g}(q),
\end{align}
\end{subequations}

\begin{remark}
	Note that the agents dynamics under consideration hold for generic robotic agents comprising of a moving base and a robotic arm. Hence, the considered framework can be applied for mobile, aerial, or underwater manipulators. 
\end{remark}

We can now formulate the problem considered in this work:
\begin{problem}
	Consider $N$ robotic agents rigidly grasping an object, governed by the coupled dynamics \eqref{eq:coupled dynamics 2}. Given the desired pose $x_{\scriptscriptstyle O,\text{des}}$, design the control input $u: \mathbb{R}_{\ge 0} \to \mathbb{R}^{6N}$ such that $\lim\limits_{t\to\infty}x_{\scriptscriptstyle O}(t) = x_{\scriptscriptstyle O,\text{des}}$, while ensuring the satisfaction of the following collision avoidance and singularity properties:
	\begin{enumerate}
		\item $\mathcal{A}_i(q_i)\cap\mathcal{O}_z = \emptyset, \forall i\in\mathcal{N}, z\in\mathcal{Z}$,
		\item $\mathcal{C}_{\scriptscriptstyle O}(x_{\scriptscriptstyle O})\cap\mathcal{O}_z = \emptyset, \forall z \in \mathcal{Z}$,
		\item $\mathcal{A}_i(q_i)\cap\mathcal{A}_{j}(q_{j}) = \emptyset, \forall i, j \in \mathcal{N}, i\neq j$,
		\item $-\tfrac{\pi}{2} < -\bar{\theta} \le  \theta_{\scriptscriptstyle O} \le \bar{\theta} < \tfrac{\pi}{2}$,
		\item $-\tfrac{\pi}{2} < -\bar{\theta} \le \theta_{\scriptscriptstyle B_i} \le \bar{\theta} < \tfrac{\pi}{2}$,
		\item $q_i \in \widetilde{\mathcal{Q}}_i$.
	\end{enumerate} 
	for a $0 < \bar{\theta} < \frac{\pi}{2}$, as well as the input and velocity magnitude and input constraints: $\lvert \tau_{i_k} \rvert \leq \bar{\tau}_i, \lvert \dot{q}_{i_k} \rvert \leq \bar{\dot{q}}_i, \forall k\in\{1,\dots,n_i\}, i\in\mathcal{N}$, for some positive constants $\bar{\tau}_i, \bar{\dot{q}}_i, i\in\mathcal{N}$.
	
	The aforementioned constraints correspond to the following specifications:
	\begin{itemize}
		\item $1)$ stands for collision avoidance between the agents and the obstacles.
		\item $2)$ stands for collision avoidance between the object and the obstacles.
		\item $3)$ stands for collision avoidance between the agents.
		\item $4)$ stands for representation singularity avoidance of the object.
		\item $5)$ stands for representation singularity avoidance of the agents' bases.
		\item $6)$ stands for kinematic singularity avoidance of the agents. 
	\end{itemize}
	
\end{problem}

In order to solve the aforementioned problem, we need the following reasonable assumption regarding the workspace:

\begin{assumption} \label{ass:feasility_assumption}
	(Problem Feasibility Assumption) The distance between any pair of obstacles is sufficiently large such that the coupled system object-agents can navigate among them without collisions.
\end{assumption}

\noindent We also define the following sets for every $i \in \mathcal{N}$:
\begin{align}
S_{i, {\scriptscriptstyle O}}(q) &= \{q_i\in\mathbb{R}^{n_i} : \mathcal{A}_i(q_i)\cap\mathcal{O}_z \neq \emptyset, \forall z \in \mathcal{Z} \}, \notag \\
S_{i, \scriptscriptstyle A}(q) &= \{q_i\in\mathbb{R}^{n_i} : \mathcal{A}_i(q_i)\cap\mathcal{A}_{j}(q_{j}) \neq \emptyset, \forall j \in \mathcal{N} \backslash \{i\} \}, \notag \\
S_{\scriptscriptstyle O}(x_{\scriptscriptstyle O}) &= \{x_{\scriptscriptstyle O} \in \mathbb{M}: \mathcal{C}_{\scriptscriptstyle O}(x_{\scriptscriptstyle O}) \cap \mathcal{O}_z \neq \emptyset  \}. \notag
\end{align}
associated with the desired collision-avoidance properties.

\section{Problem Solution} \label{sec:solution}

In this section, a systematic solution to Problem 1 is introduced. Our overall approach builds on designing a  Nonlinear Model Predictive control scheme the system of the manipulators and the object. Nonlinear Model Predictive Control (see e.g. \cite{morrari_npmpc, frank_2003_nmpc_bible, frank_1998_quasi_infinite, frank_2003_towards_sampled-data-nmpc, fontes_2001_nmpc_stability, grune_2011_nonlinear_mpc, camacho_2007_nmpc, cannon_2001_nmpc, borrelli_2013_nmpc}) have been proven suitable for dealing with nonlinearities and state and input constraints.

The coupled agents-object \emph{nonlinear dynamics} can be written in compact form as follows:

\begin{equation} \label{eq:main_system}
\dot{x} = f(x,u)= \begin{bmatrix}
f_1(x,u) \\
f_2(x,u) \\
f_3(x,u) 
\end{bmatrix}, x(0) = x_0,
\end{equation}
where $x = [x_{\scriptscriptstyle O}^\top, v_{\scriptscriptstyle O}^\top, q^\top]^\top \in \mathbb{R}^{n+12}, u \in \mathbb{R}^{6N}$ and 
\begin{align}
f_1(x,u) &= J^{-1}_{\scriptscriptstyle O_r}(x_{\scriptscriptstyle O})v_{\scriptscriptstyle O}, \notag \\
f_2(x,u) &= \widetilde{M}^{-1}(q)\left[G^\top(q)u - \widetilde{C}(q,\dot{q})v_{\scriptscriptstyle O} - \widetilde{g}(q) \right],  \notag \\
f_3(x,u) &= \hat{J}(q)J_{\scriptscriptstyle O}(q)\widetilde{I}v_{\scriptscriptstyle O}, \notag 
\end{align}
where we have also used that:
\begin{align}
\hat{J}(q) &= \text{diag}\left\{ \left[ (J^\top_iJ_i)^{-1}J^\top_i \right]_{i\in\mathcal{N}}  \right\} \in\mathbb{R}^{n\times 6N}, \notag \\
J_{\scriptscriptstyle O}(q) &= \text{diag}\left\{ \left[ J_{\scriptscriptstyle O_i} \right]_{i\in\mathcal{N}}  \right\} \in\mathbb{R}^{6N\times 6N}, \notag \\
\widetilde{I} &= 
\begin{bmatrix}
I_6, \cdots, I_6
\end{bmatrix}^\top \in \mathbb{R}^{6N \times 6}.
\end{align}
The expression for $f_3(x,u)$ is derived by employing \eqref{eq:object-end-effector jacobian_dot} and \eqref{eq:diff_kinematics_2}. Note that $f$ is \emph{locally Lipschitz continuous} in its domain since it is continuously differentiable in its domain. Next, we define the respective errors:
\begin{align}
e(t) &= x(t) - x_{\text{des}} =
\begin{bmatrix}
x_{\scriptscriptstyle O}(t) \\ 
v_{\scriptscriptstyle O}(t) \\
q(t)
\end{bmatrix}
-
\begin{bmatrix}
x_{\scriptscriptstyle O,\text{des}} \\
\dot{x}_{\scriptscriptstyle O,\text{des}} \\
q_{\text{des}} 
\end{bmatrix} =
\begin{bmatrix}
x_{\scriptscriptstyle O}(t)-x_{\scriptscriptstyle O,\text{des}} \\ 
v_{\scriptscriptstyle O}(t) \\
q(t) - q_{\text{des}} 
\end{bmatrix} \in\mathbb{R}^{n+12}, \label{eq:error}
\end{align}
where $q_{\text{des}}=[q_{1,\text{des}},\dots,q_{N,\text{des}}]^\top$ is appropriately chosen such that $x_{\scriptscriptstyle O}(t) = x_{\scriptscriptstyle O, \text{des}}, \forall t \text{ s.t. } q(t)  = q_{\text{des}}$ (see \eqref{eq:coupled kinematics}), and $\dot{x}_{\scriptscriptstyle O,\text{des}} = \dot{q}_{\text{des}} = 0$. The error dynamics are then $\dot{e}(t) = f(x(t),u(t))$, which can be appropriately transformed to be written as:
\begin{equation}
\dot{e}(t) = f_e(e(t),u(t)), e(0) = e_0 = x(0)-x_{\text{des}}. \label{eq:error_dynamics}
\end{equation}
where $f_e(t) \triangleq f(e(t)+x_{\text{des}}, u(t)$. By ignoring over-actuated input terms, we have that $\tau_i = J_i^\top(q_i)u_i$, which becomes
\begin{align}
\lVert \tau_i \rVert \leq \bar{\tau}_i \Leftrightarrow \sigma_{\min,i}\lVert u_i \rVert \leq \bar{\tau}_i,
\end{align} 
where we have employed the property $\sigma_{\min}(J^\top_i)\lVert u_i\rVert \leq \lVert J^\top_i u_i \rVert$, with $\sigma_{\min}(J^\top_i)$ denoting the minimum singular value of $J^\top_i$, which is strictly positive, if the constraint $q_i\in\widetilde{\mathcal{Q}}_i$ is always satisfied. Hence, the constraint $\lvert \tau_{i_k} \rvert \leq \bar{\tau}_i$ is equivalent to 
\begin{equation}
\lVert u_i \rVert \leq \frac{\bar{\tau}_i}{\sigma_{\min}(J^\top_i)}, \forall i\in\mathcal{N}.
\end{equation}

Let us now define the following set $U \subseteq \mathbb{R}^{6N}$:
\begin{equation}
U = \left\{ u\in\mathbb{R}^{6N} :  \lVert u_i \rVert \leq \frac{\bar{\tau}_i}{\sigma_{\min}(J^\top_i)}, \forall i\in\mathcal{N} \right\},
\end{equation}
as the set that captures the control input constraints of the error dynamics system \eqref{eq:error_dynamics}. Define also the set $X \subseteq \mathbb{R}^{n+12}$:
\begin{align*}
X &= \Big\{x \in \mathbb{R}^{n+12} : \theta_{\scriptscriptstyle O}(t) \in [-\bar{\theta}, \bar{\theta}], \theta_{\scriptscriptstyle B_i}(t) \in [-\bar{\theta}, \bar{\theta}], \notag \\
&\hspace{30mm} \lvert \dot{q}_{k_i} \rvert \leq \bar{\dot{q}}_i, q_i \in\widetilde{\mathcal{Q}}_i\backslash\left(\mathcal{S}_{i, {\scriptscriptstyle O}}(q_i) \cup \mathcal{S}_{i, \scriptscriptstyle A}(q_i)\right), \notag \\
&\hspace{30mm} x_{\scriptscriptstyle O}\in\mathbb{R}^3\backslash S_{\scriptscriptstyle O}(x_{\scriptscriptstyle O}), \forall t \in \mathbb{R}_{\ge 0} \Big\}.
\end{align*}
The set $X$ captures all the state constraint of the system dynamics \eqref{eq:main_system}. In view of \eqref{eq:error}, we define the set $E \subseteq \mathbb{R}^{n+12}$ as:
\begin{equation*}
E = \{e \in \mathbb{R}^{n+12}: e \in X \oplus (-x_\text{des}) \},
\end{equation*}
as the set that captures all the constraints of the error dynamics system \eqref{eq:error_dynamics}.

The problem in hand is the design of a control input $u(t)\in U$ such that $\lim_{t \to \infty} \|e(t)\| = 0$ while ensuring  $e(t) \in E, \forall t\in\mathbb{R}_{\geq 0}$.
In order to solve the aforementioned problem, we propose a Nonlinear Model Predictive scheme, that is presented hereafter.

Consider a sequence of sampling times $\{t_i\}_{i \ge 0}$ with a constant sampling period $0 < h < T_p$, where is $T_p$ is the prediction horizon, such that: 
\begin{equation} \label{eq:t_i_equals_h}
t_{i+1} = t_i + h, \forall \ i \ge 0.
\end{equation}
In the sampling-data NMPC, a finite-horizon open-loop optimal control problem (OCP) is solved at discrete sampling time instants $t_i$ based on the current state error information $e(t_i)$. The solution is an optimal control signal $\hat{u}(t)$, for $t \in [t_i,t_i+T_p]$. For more details, the reader is referred to \cite{frank_2003_nmpc_bible}. The open-loop input signal applied in between the sampling instants is given by the solution of the following Optimal Control Problem (OCP):
\begin{subequations}
	\begin{align}
	&\hspace{-4mm}\min\limits_{\hat{u}(\cdot)} J(e(t_i),\hat{u}(\cdot)) \notag \\ 
	&\hspace{-4mm}= \min\limits_{\hat{u}(\cdot)} \left\{  V(\hat{e}(t_i+T_p)) + \int_{t_i}^{t_i+T_p} \Big[ F(\hat{e}(s), \hat{u}(s)) \Big] ds \right\}  \label{eq:mpc_minimazation} \\
	&\hspace{-4mm}\text{subject to:} \notag \\
	&\hspace{1mm} \dot{\hat{e}}(s) = f_e(\hat{e}(s),\hat{u}(s)), \hat{e}(t_i) = e(t_i), \label{eq:diff_mpc} \\
	&\hspace{1mm} \hat{e}(s) \in E, \hat{u}(s) \in U, s \in [t_i,t_i+T_p], \\
	&\hspace{1mm} \hat{e}(t_i+T_p)\in\mathcal{E}_f, \label{eq:mpc_terminal_set}
	\end{align}
\end{subequations}
where the hat $\hat{\cdot}$ denotes the predicted variables (internal to the controller), i.e. $\hat{e}(\cdot)$ is the solution of \eqref{eq:diff_mpc} driven by the control input $\hat{u}(\cdot): [t_i, t_i+T_p] \to \mathcal{U}$ with initial condition $e(t_i)$. Note that the predicted values are not necessarily the same with the actual closed-loop values (see \cite{frank_2003_nmpc_bible}). The term $F: E \times U \to \mathbb{R}_{\ge 0}$, is the \emph{running cost}, and is chosen as:
\begin{equation} \label{eq:terminal_cost}
F(e,u) = e^\top Q e + u^\top R u.
\end{equation}
The terms $V: E \to \mathbb{R}_{ > 0}$ and $\mathcal{E}_f$ are the \emph{terminal penalty cost} and \emph{terminal set}, respectively, and are used to enforce the stability of the system (see Section 4.2). The terminal cost is given by $V(e) = e^\top P e$.
The terms $Q \in \mathbb{R}^{(n+12)\times(n+12)}_{\geq 0}$, $P \in \mathbb{R}^{(n+12)\times(n+12)}_{>0}$ and $R \in \mathbb{R}^{6N\times6N}_{>0}$ are chosen as:
\begin{align*}
Q = \text{diag} \{ \widetilde{q}_1, \dots, \widetilde{q}_{n+12} \}, P = \text{diag} \{ \widetilde{p}_1, \dots, \widetilde{p}_{n+12} \}, R = \text{diag} \{ \widetilde{r}_1, \dots, \widetilde{r}_{6N} \}.
\end{align*}
where $\widetilde{q}_i \in\mathbb{R}_{\geq 0}, \widetilde{p}_i\in\mathbb{R}_{>0} , \forall i \in \{1, \dots, n+12 \} $ and $\widetilde{r}_j\in\mathbb{R}_{>0}, \forall j \in \{1, \dots, 6N\}$ are constant weights. 

\begin{lemma} \label{lemma:F_i_bounded_K_class}
There exist functions $\alpha_1$, $\alpha_2 \in \mathcal{K}_{\infty}$ such that: $$\alpha_1\big(\|z\|\big) \leq F\big(e, u\big) \leq \alpha_2\big(\|z \|\big),$$ for every $z \triangleq \left[ e^\top, u^\top\right]^\top \in \mathcal{E} \times \mathcal{U}$.
\end{lemma}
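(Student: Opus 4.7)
The plan is to observe that $F(e,u) = e^\top Q e + u^\top R u$ is a sum of two positive (semi-)definite quadratic forms, so the proof reduces to the standard eigenvalue sandwich for quadratic forms, combined with the identity $\|z\|^2 = \|e\|^2 + \|u\|^2$. The candidate class-$\mathcal{K}_\infty$ functions will simply be scaled squares, namely $\alpha_1(r) = c_1 r^2$ and $\alpha_2(r) = c_2 r^2$ with carefully chosen positive constants $c_1, c_2$.

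First I would invoke the Rayleigh-Ritz inequality: since $Q$ and $R$ are symmetric (in fact diagonal, by the definitions preceding the lemma) and positive definite, for every $e \in \mathbb{R}^{n+12}$ and $u \in \mathbb{R}^{6N}$ it holds that
\begin{equation*}
\lambda_{\min}(Q)\,\|e\|^2 \le e^\top Q e \le \lambda_{\max}(Q)\,\|e\|^2, \qquad \lambda_{\min}(R)\,\|u\|^2 \le u^\top R u \le \lambda_{\max}(R)\,\|u\|^2.
\end{equation*}
Here $\lambda_{\min}(Q) = \min_i \widetilde{q}_i > 0$ and $\lambda_{\min}(R) = \min_j \widetilde{r}_j > 0$ (and similarly for the maxima), which are positive by the choice of the weights. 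Adding the two sides and using $\|z\|^2 = \|e\|^2 + \|u\|^2$ yields
\begin{equation*}
c_1 \|z\|^2 \;\le\; F(e,u) \;\le\; c_2 \|z\|^2,
\end{equation*}
where $c_1 \triangleq \min\{\lambda_{\min}(Q),\lambda_{\min}(R)\}$ and $c_2 \triangleq \max\{\lambda_{\max}(Q),\lambda_{\max}(R)\}$.

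Finally, I would define $\alpha_1, \alpha_2 : \mathbb{R}_{\ge 0} \to \mathbb{R}_{\ge 0}$ by $\alpha_1(r) \triangleq c_1 r^2$ and $\alpha_2(r) \triangleq c_2 r^2$. Both are continuous, strictly increasing on $[0,\infty)$, vanish at $0$, and satisfy $\alpha_k(r) \to \infty$ as $r \to \infty$, hence belong to $\mathcal{K}_\infty$ in the sense of Definition \ref{def:class_K}. Substituting $r = \|z\|$ delivers the desired sandwich inequality.

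I do not anticipate a serious obstacle: the only subtle point is that the lower bound requires the minimum eigenvalue of $Q$ to be strictly positive, which is an implicit assumption hidden in the phrasing ``$\widetilde{q}_i \in \mathbb{R}_{\ge 0}$'' of the weight definition. If any $\widetilde{q}_i$ were allowed to vanish, only a $\mathcal{K}$ (not $\mathcal{K}_\infty$) lower bound in $\|u\|$ would remain, so to make the statement rigorous I would at the outset of the proof strengthen the requirement to $\widetilde{q}_i > 0$ for all $i$, which is consistent with the standing use of $Q$ in stabilizing NMPC cost functions and with the positive-definiteness of $R$ already imposed.
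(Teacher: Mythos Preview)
Your proof is correct and follows essentially the same approach as the paper: apply the Rayleigh quotient bounds to each quadratic form, combine them via $\|z\|^2=\|e\|^2+\|u\|^2$, and take $\alpha_1(r)=\min\{\lambda_{\min}(Q),\lambda_{\min}(R)\}\,r^2$, $\alpha_2(r)=\max\{\lambda_{\max}(Q),\lambda_{\max}(R)\}\,r^2$. Your observation about the need for $\widetilde{q}_i>0$ is apt---the paper's own proof tacitly makes the same positive-definiteness assumption on $Q$ despite stating $\widetilde{q}_i\in\mathbb{R}_{\ge 0}$ earlier.
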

\begin{proof}
	The proof can be found in Appendix \ref{app:proof_lemma_1}.
\end{proof}

The solution of the OCP  \eqref{eq:mpc_minimazation}-\eqref{eq:mpc_terminal_set} at time $t_i$ provides an optimal control input denoted by $\hat{u}^\star(t; e(t_i))$, for $t \in [t_i, t_i+T_p]$. It defines the open-loop input that is applied to the system until the next sampling instant $t_{i+1}$:
\begin{equation} \label{eq:control_input_star}
u(t; e(t_i)) = \hat{u}^\star(t_i; e(t_i)), t \in [t_i, t_{i+1}).
\end{equation} 
The corresponding \emph{optimal value function} is given by:
\begin{equation} \label{eq:J_star}
J^\star(e(t_i)) \triangleq J^\star(e(t_i), \hat{u}^\star(\cdot; e(t_i))).
\end{equation}
where $J(\cdot)$ as is given in \eqref{eq:mpc_minimazation}. The control input $ u(t; e(t_i))$ is a feedback, since it is recalculated at each sampling instant using the new state information. The solution of \eqref{eq:error_dynamics} starting at time $t_1$ from an initial condition $e(t_1)$, applying a control input $u: [t_1, t_2] \to \mathcal{U}$ is denoted by $e(s; u(\cdot), e(t_1)), s \in [t_1, t_2]$. The predicted state of the system \eqref{eq:error_dynamics} at time $t_i+s, s > 0$ is denoted by $\hat{e}(t_i+s; u(\cdot), e(t_i))$ and 
it is based on the measurement of the state $e(t_i)$ at time $t_i$, when a control input $u(\cdot; e(t_i))$ is applied to the system \eqref{eq:error_dynamics} for the time period $[t_i, t_i+s]$. Thus, it holds that:
\begin{equation} \label{eq:predicted_state_relation}
e(t_i) = \hat{e}(t_i; u(\cdot), e(t_i)).
\end{equation} 

\noindent We define an admissible control input as:

\begin{definition} \label{def:admissible_control_input}
	A control input $u: [0, T_p] \to \mathbb{R}^{6N}$ for a state $e_0$ is called \emph{admissible}, if all the following hold:
	\begin{enumerate}
		\item $u(\cdot)$ is piecewise continuous;
		\item $u(s) \in U, \forall \ s \in [0, T_p]$;
		\item $e(s; u(\cdot), e_0) \in E, \forall \ s \in [0, T_p]$;
		\item $e(T_p; u(\cdot), e_0) \in \mathcal{E}_f$;
	\end{enumerate}
\end{definition}

\begin{lemma}
	The terminal penalty function $V(\cdot)$ is Lipschitz continues in $\mathcal{E}_f$, with Lipschitz constant $L_V = 2\varepsilon_0 \sigma_{\max}(P)$, for all $e(t) \in \mathcal{E}_f$.
\end{lemma}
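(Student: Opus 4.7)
The plan is to exploit the quadratic structure of $V(e)=e^\top P e$ and the (implicit) boundedness of $\mathcal{E}_f$, which in standard terminal-set constructions for NMPC takes the form of a sublevel set of $V$ contained in a Euclidean ball of radius $\varepsilon_0$ about the origin, so that every $e\in\mathcal{E}_f$ satisfies $\lVert e\rVert\le \varepsilon_0$. No use of the mean value theorem or of the error dynamics $f_e$ is needed, since $V$ is a fixed quadratic form independent of time.

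First, for any $e_1,e_2\in\mathcal{E}_f$ I would use the symmetry of $P$ to derive the telescoping identity
\begin{equation*}
V(e_1)-V(e_2) = e_1^\top P e_1 - e_2^\top P e_2 = (e_1-e_2)^\top P (e_1+e_2),
\end{equation*}
which is verified by expanding and regrouping: $e_1^\top P e_1 - e_2^\top P e_2 = e_1^\top P(e_1-e_2)+(e_1-e_2)^\top P e_2$, and using $P=P^\top$ to combine the two terms into $(e_1-e_2)^\top P(e_1+e_2)$. This identity cleanly isolates the increment $e_1-e_2$.

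Next, I would apply Cauchy--Schwarz together with the submultiplicative bound $\lVert P y\rVert\le \sigma_{\max}(P)\lVert y\rVert$ (valid since $P\in\mathbb{R}^{(n+12)\times(n+12)}_{>0}$) to obtain
\begin{equation*}
\lvert V(e_1)-V(e_2)\rvert \le \lVert e_1-e_2\rVert\,\sigma_{\max}(P)\,\lVert e_1+e_2\rVert.
\end{equation*}
The triangle inequality, combined with $\lVert e_i\rVert \le \varepsilon_0$ for $i=1,2$, gives $\lVert e_1+e_2\rVert \le 2\varepsilon_0$, so that
\begin{equation*}
\lvert V(e_1)-V(e_2)\rvert \le 2\varepsilon_0\,\sigma_{\max}(P)\,\lVert e_1-e_2\rVert,
\end{equation*}
which is exactly the claimed Lipschitz estimate with $L_V=2\varepsilon_0\sigma_{\max}(P)$.

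The only real obstacle is the one of bookkeeping: the excerpt does not explicitly state the defining property of $\varepsilon_0$, so the proof implicitly invokes the fact (standard in the NMPC stability literature, e.g.\ \cite{frank_1998_quasi_infinite,frank_2003_nmpc_bible}) that the terminal set $\mathcal{E}_f$ is chosen as a sublevel set of $V$ contained in $\{e:\lVert e\rVert\le \varepsilon_0\}$. Once that convention is fixed, the argument above is routine and purely algebraic; no appeal to the Lipschitz structure of $f_e$, to Assumption~\ref{ass:feasility_assumption}, or to the singularity-avoidance constraints is required.
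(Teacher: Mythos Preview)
Your proof is correct and follows essentially the same approach as the paper: both exploit the quadratic structure of $V$, factor out the increment $e_1-e_2$, and bound the remainder using $\sigma_{\max}(P)$ together with the (implicit) radius condition $\lVert e_i\rVert\le\varepsilon_0$ on $\mathcal{E}_f$. Your use of the polarization identity $V(e_1)-V(e_2)=(e_1-e_2)^\top P(e_1+e_2)$ is slightly cleaner than the paper's add-and-subtract step followed by the triangle inequality, but the substance is identical, and your remark that $\varepsilon_0$ is never explicitly defined in the excerpt is well taken.
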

\begin{proof}
	The proof can be found in Appendix \ref{app:lemma_2}.
\end{proof}

Through the following theorem, we guarantee the stability of the system which is the solution to Problem 1.

\begin{theorem}
	Consider the Assumptions 1,2. Suppose also that:
	\begin{enumerate}
		\item The OCP \eqref{eq:mpc_minimazation}-\eqref{eq:mpc_terminal_set} is feasible for the initial time $t = 0$.
		\item The terminal set $\mathcal{E}_f \subseteq E$ is closed, with $0_{n+12} \in \mathcal{E}_f$.
		\item The terminal set $\mathcal{E}_f$ is chosen such that there exists an admissible control input $u_f: [0, h] \to \mathcal{U}$ such that for all $e(s) \in \mathcal{E}_f$ it holds that:
		\begin{enumerate}
			\item $e(s) \in \mathcal{E}_f, \forall \ s \in [0, h]$.
			\item $\displaystyle \frac{\partial V}{\partial{e}} f_e(e(s), u_f(s)) + F(e(s), u_f(s)) \le 0, \forall \ s \in [0, h].$
		\end{enumerate}
	\end{enumerate}
	Then, the closed loop trajectories of the system \eqref{eq:error_dynamics}, converges to the set $\mathcal{E}_f$, as $t \to \infty$.
\end{theorem}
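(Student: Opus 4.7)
The plan is to follow the classical sampled-data NMPC convergence argument, which rests on two ingredients: (i) recursive feasibility of the OCP along the closed-loop trajectory, and (ii) monotone decrease of the optimal value function $J^\star$. Feasibility at $t=0$ is given by hypothesis~1, so the first step is to show inductively that feasibility at $t_i$ implies feasibility at $t_{i+1}=t_i+h$.

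For recursive feasibility, I would construct, at sampling instant $t_{i+1}$, a candidate admissible input by concatenating the tail of the optimal input at $t_i$ with the local controller $u_f$ supplied by hypothesis~3:
\begin{equation*}
\bar{u}(s) = \begin{cases}
\hat{u}^\star(s; e(t_i)), & s \in [t_{i+1},\, t_i+T_p], \\
u_f\bigl(s-(t_i+T_p)\bigr), & s \in [t_i+T_p,\, t_{i+1}+T_p].
\end{cases}
\end{equation*}
On the first sub-interval, admissibility is inherited from optimality at $t_i$. On the second, the optimal predicted trajectory enters $\mathcal{E}_f$ at $t_i+T_p$ by constraint \eqref{eq:mpc_terminal_set}, and hypothesis~3(a) keeps the predicted state in $\mathcal{E}_f\subseteq E$ throughout $[t_i+T_p, t_{i+1}+T_p]$, which in particular satisfies the terminal constraint at $t_{i+1}+T_p$; the input constraint holds because $u_f$ is admissible. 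Hence $\bar{u}$ is feasible for the OCP at $t_{i+1}$, and by induction feasibility propagates for all $i$.

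Next I would upper-bound $J^\star(e(t_{i+1}))$ by $J(e(t_{i+1}),\bar{u})$. Splitting the running-cost integral of $\bar{u}$ over the two sub-intervals and using $J^\star(e(t_i))=V(\hat{e}^\star(t_i+T_p))+\int_{t_i}^{t_i+T_p} F(\hat{e}^\star,\hat{u}^\star)\, ds$ yields
\begin{align*}
J(e(t_{i+1}),\bar{u}) - J^\star(e(t_i)) &= -\int_{t_i}^{t_{i+1}} F(\hat{e}^\star,\hat{u}^\star)\, ds \\
&\quad + V(\hat{e}(t_{i+1}+T_p)) - V(\hat{e}^\star(t_i+T_p)) \\
&\quad + \int_{t_i+T_p}^{t_{i+1}+T_p} F(\hat{e}, u_f)\, ds.
\end{align*}
Integrating hypothesis~3(b), i.e.\ $\tfrac{\partial V}{\partial e}f_e(\hat{e},u_f)+F(\hat{e},u_f)\le 0$, along the tail trajectory driven by $u_f$ over $[t_i+T_p,t_{i+1}+T_p]$, shows that the sum of the last three terms is non-positive. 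By optimality this gives the Lyapunov-type decrease
\begin{equation*}
J^\star(e(t_{i+1})) \le J^\star(e(t_i)) - \int_{t_i}^{t_{i+1}} F\bigl(\hat{e}^\star(s), \hat{u}^\star(s)\bigr)\, ds.
\end{equation*}

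Iterating and using $J^\star\ge 0$ gives $\sum_{i\ge 0}\int_{t_i}^{t_{i+1}} F\, ds \le J^\star(e(0)) < \infty$. Since $\hat{e}^\star$ and $\hat{u}^\star$ coincide with the closed-loop trajectory and input on $[t_i,t_{i+1})$ by \eqref{eq:predicted_state_relation} and \eqref{eq:control_input_star}, this bounds $\int_0^\infty F(e(t),u(t))\, dt$. Lemma \ref{lemma:F_i_bounded_K_class} provides $F(e(t),u(t))\ge \alpha_1(\|e(t)\|)$ (using monotonicity of $\alpha_1$ and $\|e\|\le \|[e^\top,u^\top]^\top\|$), while recursive feasibility confines $e(t)\in E$ and continuity of $f_e$ on $E\times U$ keeps $\dot{e}=f_e(e,u)$ bounded. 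Applying the Barbalat-style result of \cite{michalska_1994_barbalat} with $\gamma(e)\triangleq\alpha_1(\|e\|)$ then yields $\lim_{t\to\infty}\|e(t)\|=0$, which by hypothesis~2 ($0_{n+12}\in\mathcal{E}_f$) implies convergence into $\mathcal{E}_f$. The main obstacle is the precise bookkeeping in the cost-decrease step, in particular verifying the chain-rule/integration of hypothesis~3(b) along the predicted tail trajectory driven by $u_f$; a subsidiary issue is establishing uniform boundedness of $e$ and $\dot{e}$ along the closed loop so that the Barbalat-style lemma applies.
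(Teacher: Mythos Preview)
Your proposal is correct and follows essentially the same two-part argument as the paper: recursive feasibility via the tail-plus-$u_f$ candidate, then Lyapunov decrease of $J^\star$ by integrating hypothesis~3(b) over the appended interval, summed and finished with the Barbalat-type Lemma~1. The only notable difference is that the paper additionally invokes the Lipschitz continuity of $V$ on $\mathcal{E}_f$ (its Lemma on $L_V=2\varepsilon_0\sigma_{\max}(P)$) to bound $|V(e_1(t_i+T_p))-V(e_2(t_i+T_p))|$, and then observes this bound vanishes because $e_1(t_i+T_p)=e_2(t_i+T_p)$; your argument reaches the same inequality directly without that detour, which is slightly cleaner.
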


\begin{proof}
As usual in predictive control the proof consists of two parts: in the first part it is established that initial feasibility implies feasibility afterwards. Based on this result it is then shown that the error $e(t)$ converges to the terminal set $\mathcal{E}_f$. The feasibility analysis can be found in Appendix \ref{app:feasibility}. The convergence analysis can be found in \ref{app:convergence_analysis}.
\end{proof}

\begin{figure}[t!]
	\centering
	\includegraphics[width = 0.6\textwidth]{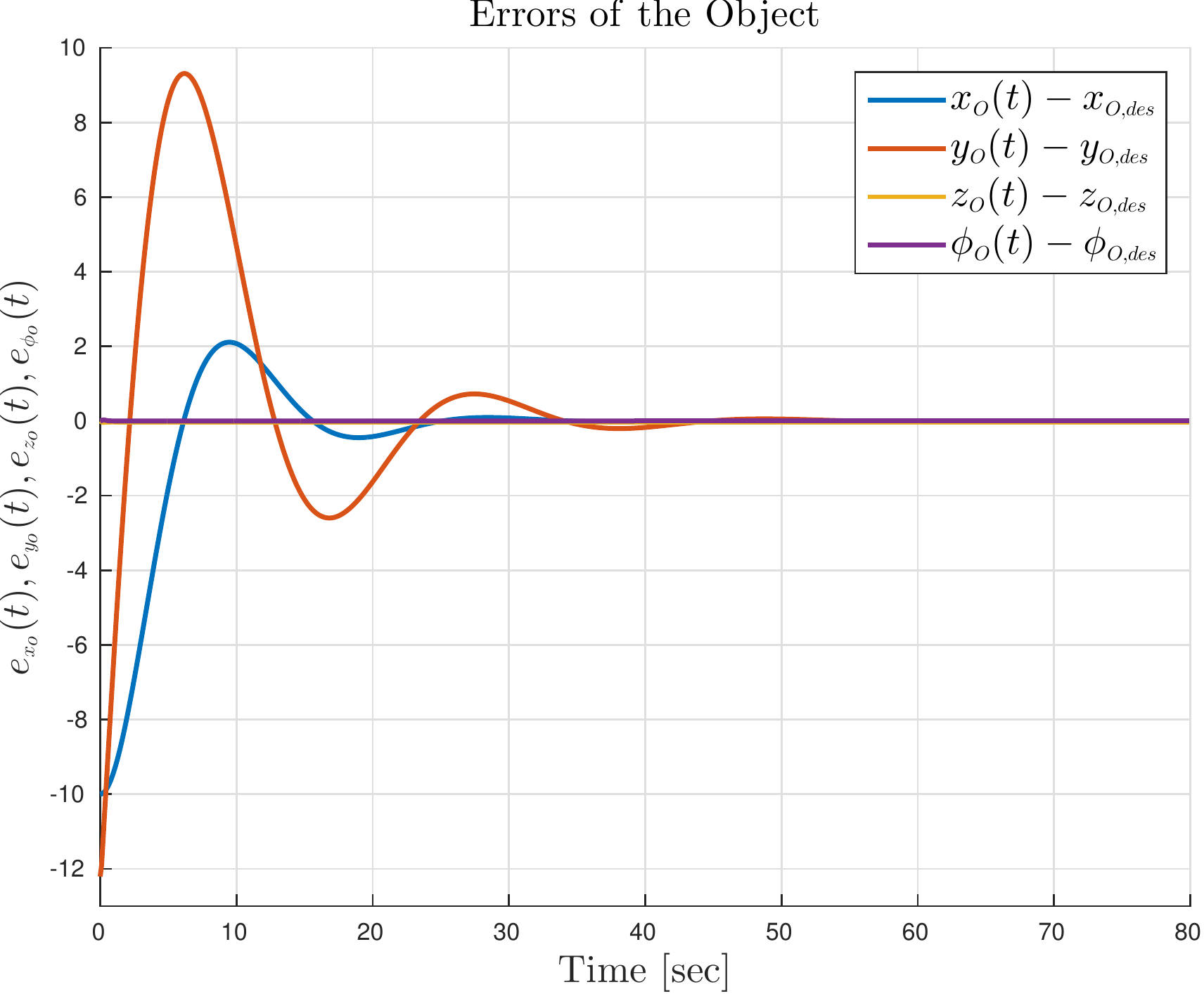}
	\caption{The errors of the object.\label{fig:sim1}}
\end{figure} 

\begin{figure}[t!]
	\centering
	\includegraphics[width = 0.6\textwidth]{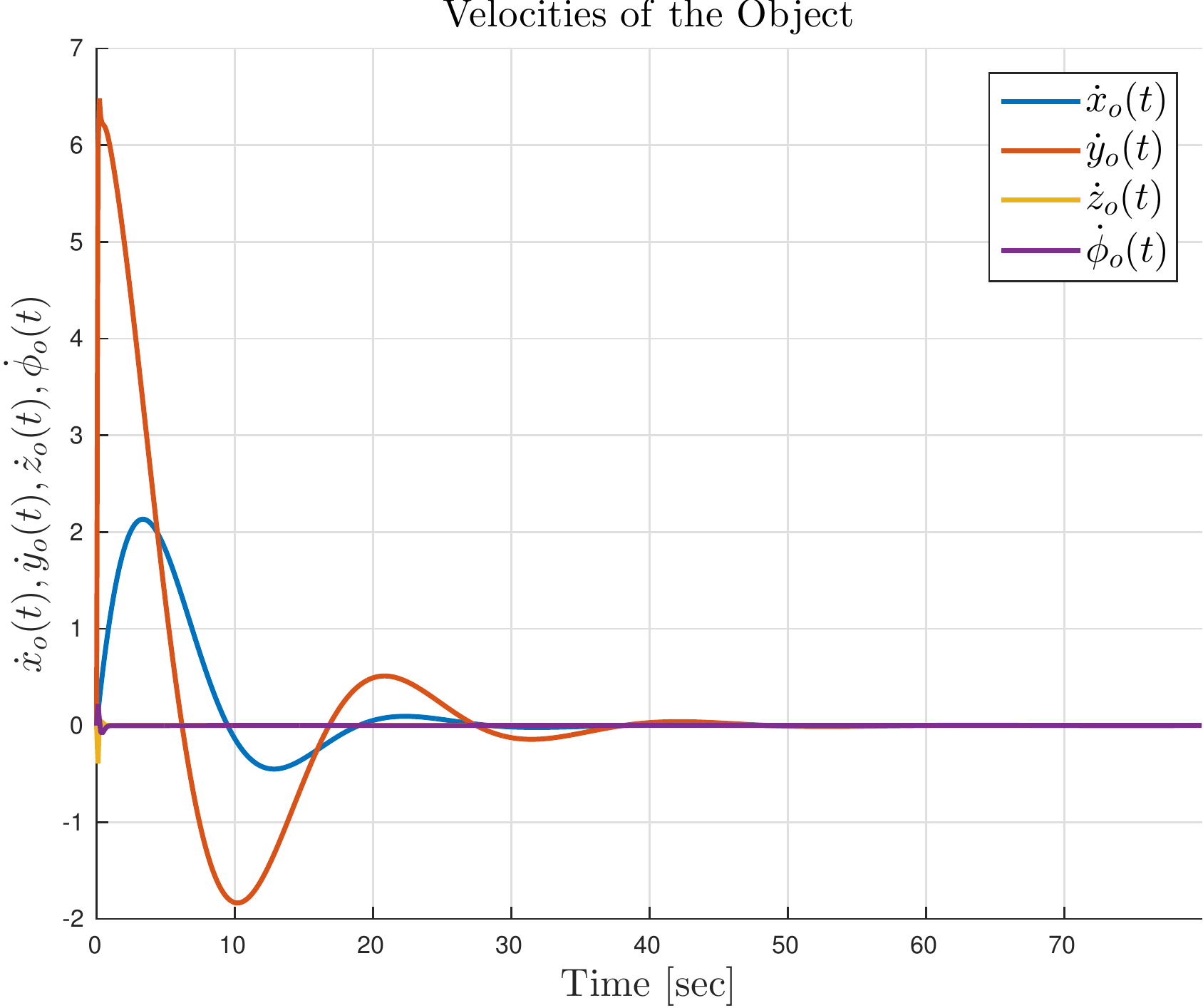}
	\caption{The velocities of the object.\label{fig:sim2}}
\end{figure} 

\section{Simulation Results} \label{sec:simulation_results}

To demonstrate the efficiency of the proposed control protocol, we consider two simulation scenarios.

\textbf{Scenario 1:} Consider $N=2$ ground vehicles equipped with $2$ DOF manipulators, rigidly grasping an object with $n_1 = n_2 = 4, n = n_1+n_2 = 8$. From \eqref{eq:main_system} we have that $x = [x_{\scriptscriptstyle O}^\top, v_{\scriptscriptstyle O}^\top, q^\top]^\top \in \mathbb{R}^{16}, u \in \mathbb{R}^{8}$, with $x_{\scriptscriptstyle O} = [p_{\scriptscriptstyle O}^\top, \phi_{\scriptscriptstyle O}]^\top \in \mathbb{R}^4$, $v_{\scriptscriptstyle O} = [\dot{p}_{\scriptscriptstyle O}^\top, \omega_{\scriptscriptstyle x_O}]^\top \in \mathbb{R}^4$, $p_{\scriptscriptstyle O} = [x_{\scriptscriptstyle O}, y_{\scriptscriptstyle O}, z_{\scriptscriptstyle O}]^\top \in \mathbb{R}^3, q = [q_1^\top, q_2^\top]^\top \in \mathbb{R}^8$, $q_i = [p_{\scriptscriptstyle B_i}^\top, \alpha_i^\top]^\top \in \mathbb{R}^4$, $p_{\scriptscriptstyle B_i} = [x_{\scriptscriptstyle B_i}, y_{\scriptscriptstyle B_i}]^\top \in \mathbb{R}^2$, $\alpha_i = [\alpha_{i_1}, \alpha_{i_2}]^\top \in \mathbb{R}^2, i \in \{1,2\}$. The manipulators become singular when $\sin(\alpha_{i_1}) = 0 \}, i \in \{1,2\}$, thus the state constraints for the manipulators are set to:
\begin{align}
\varepsilon < \alpha_{1_1} < \frac{\pi}{2}-\varepsilon, & -\frac{\pi}{2}+\varepsilon < \alpha_{1_2} < \frac{\pi}{2}-\varepsilon, \notag \\
-\frac{\pi}{2} + \varepsilon < \alpha_{2_1} < -\varepsilon,& -\frac{\pi}{2}+\varepsilon < \alpha_{2_2} < \frac{\pi}{2}-\varepsilon. \notag
\end{align}
We also consider the input constraints:
\begin{equation*}
-10 \le u_{i,j}(t) \le 10, i \in \{1, 2\}, j \in \{1,\dots,4\}.
\end{equation*}
The initial conditions are set to:
\begin{align*}
x_{\scriptscriptstyle O}(0) &= \left[0, -2.2071, 0.9071, \frac{\pi}{2} \right]^\top, v_{\scriptscriptstyle O}(0) = \left[0, 0, 0, 0\right]^\top, \\
q_{1}(0) &= \left[0, 0, \frac{\pi}{4}, \frac{\pi}{4}\right]^\top, q_{2}(0) = \left[0, -4.4142, -\frac{\pi}{4}, -\frac{\pi}{4}\right]^\top.
\end{align*}
The desired goal states are set to: 
\begin{align*}
x_{\scriptscriptstyle O, \text{des}} &= \left[10, 10, 0.9071, \frac{\pi}{2} \right]^\top, v_{\scriptscriptstyle O, \text{des}} = \left[0, 0, 0, 0\right]^\top, \\
q_{1, \text{des}} &= \left[10, 12.2071, \frac{\pi}{4}, \frac{\pi}{4}\right]^\top, q_{2, \text{des}} = \left[10, 7.7929, -\frac{\pi}{4}, -\frac{\pi}{4}\right]^\top.
\end{align*}  
We set an obstacle between the initial and the desired pose of the object. the obstacle is spherical with center $[5,5,1]$ and radius $2$. The sampling time is $h = 0.1 \sec$, the horizon is set to $T_p = 0.3 \sec$, and the total simulation time is $80 \sec$; The matrices $P, Q, R$ are set to:
\begin{equation}
P = Q = 10  I_{16 \times 16}, R = 2 I_{8 \times 8}. \notag 
\end{equation}
The simulation results are depicted in Fig. \ref{fig:sim1}- Fig. \ref{fig:sim8}, which shows that the states of the agents as well as the states of the object converge to the desired ones while guaranteeing that the obstacle is avoided and all state and input constraints are met.

\begin{figure}[t!]
	\centering
	\includegraphics[width = 0.6\textwidth]{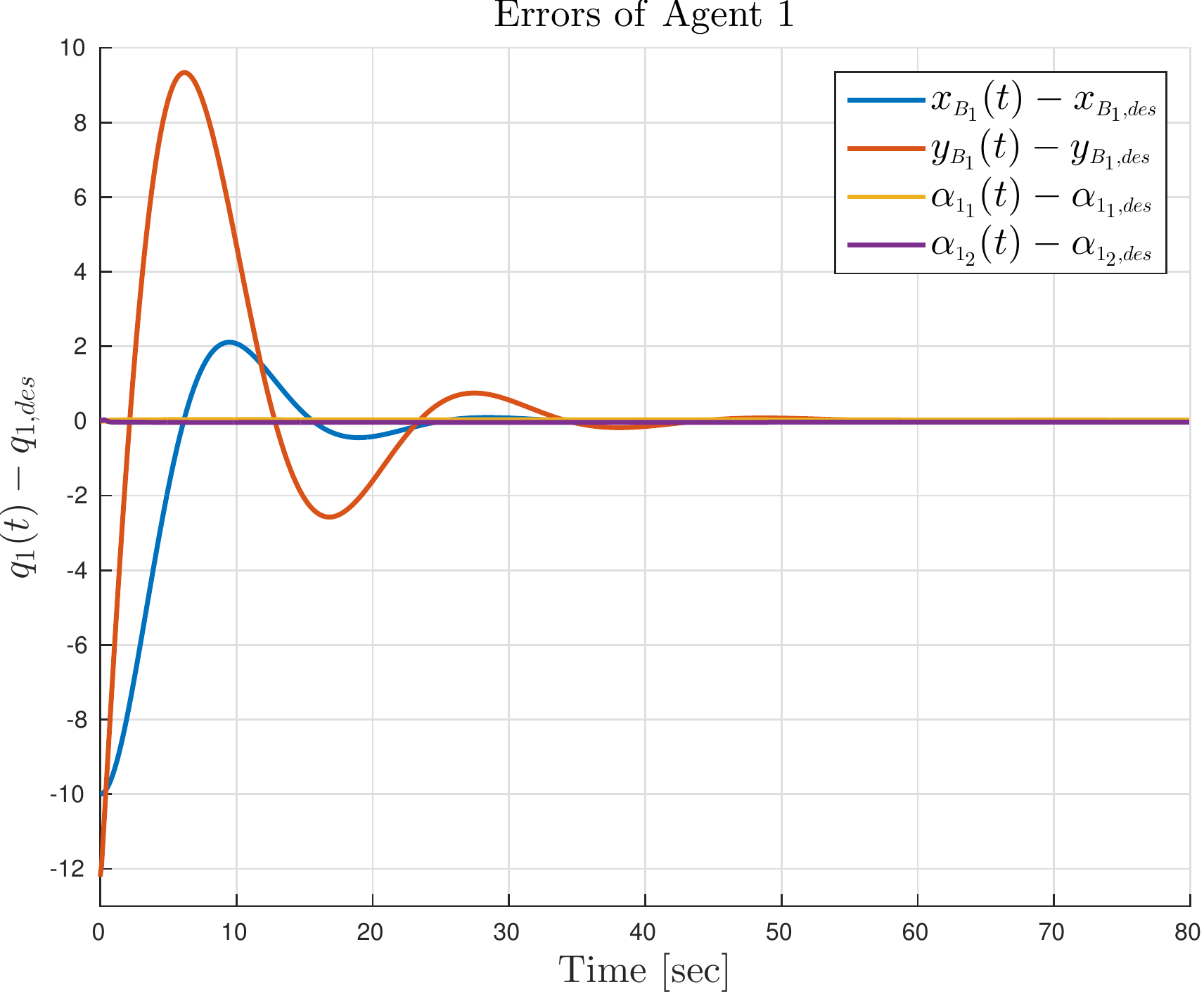}	
	\caption{The errors of vehicle $1$ as well as the errors of the manipulator.\label{fig:sim3}}
\end{figure} 

\begin{figure}[t!]
	\centering
	\includegraphics[width = 0.6\textwidth]{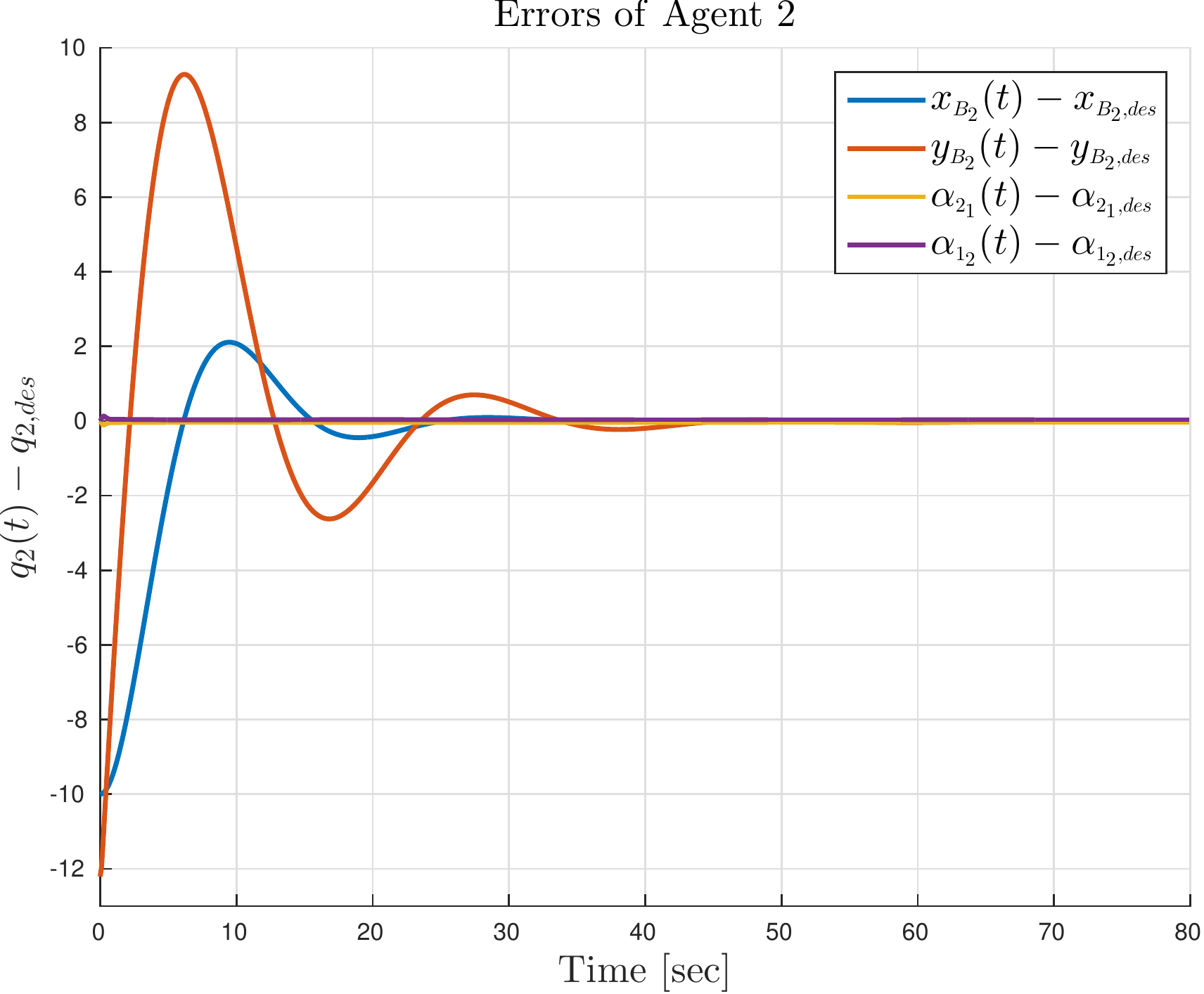}	
	\caption{The errors of vehicle $2$ as well as the errors of the manipulator.\label{fig:sim4}}
\end{figure} 

\begin{figure}[t!]
	\centering
	\includegraphics[width = 0.6\textwidth]{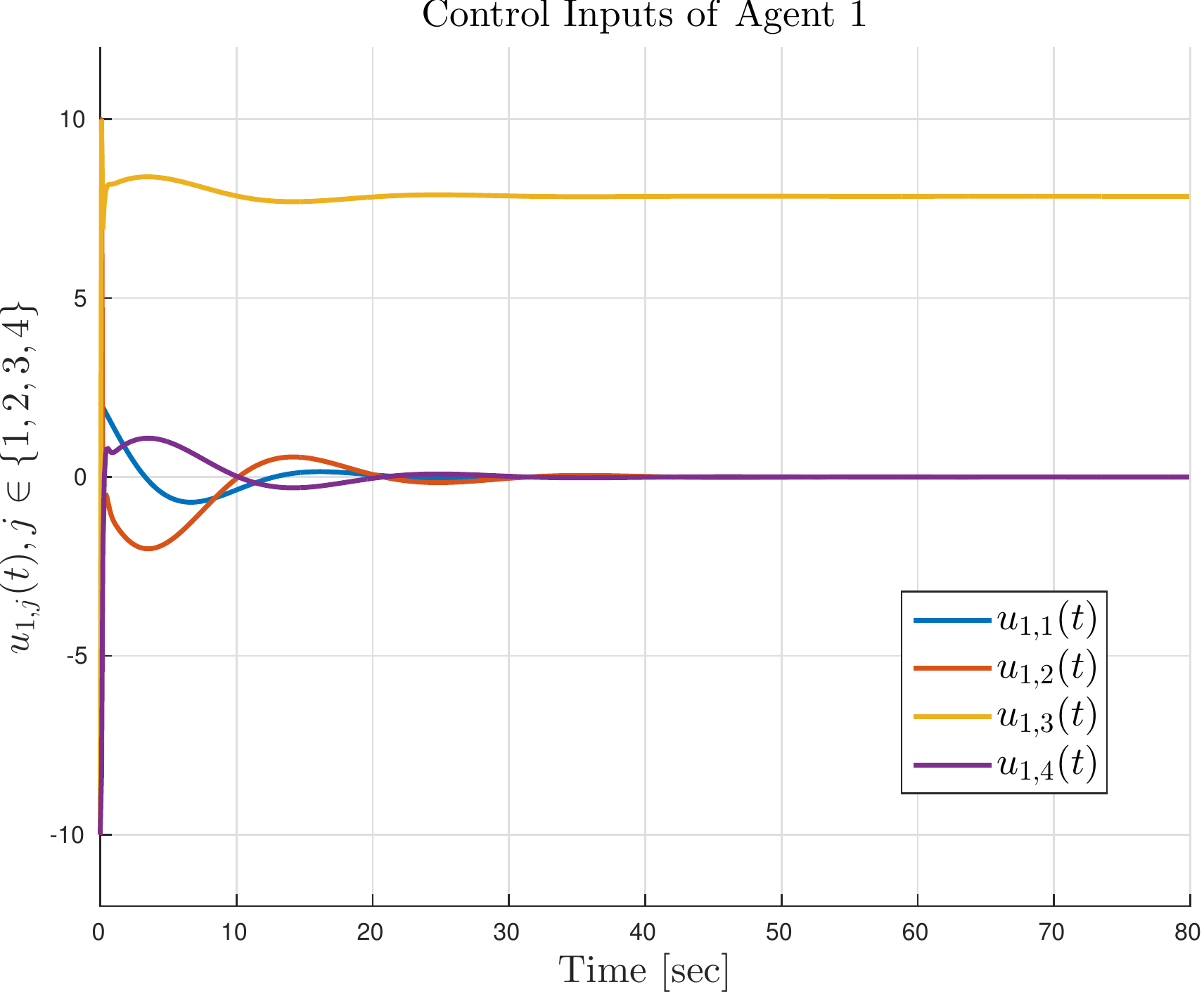}
	\caption{The control inputs of the actuators of agent $1$.\label{fig:sim7}}
\end{figure} 

\begin{figure}[t!]
	\centering
	\includegraphics[width = 0.6\textwidth]{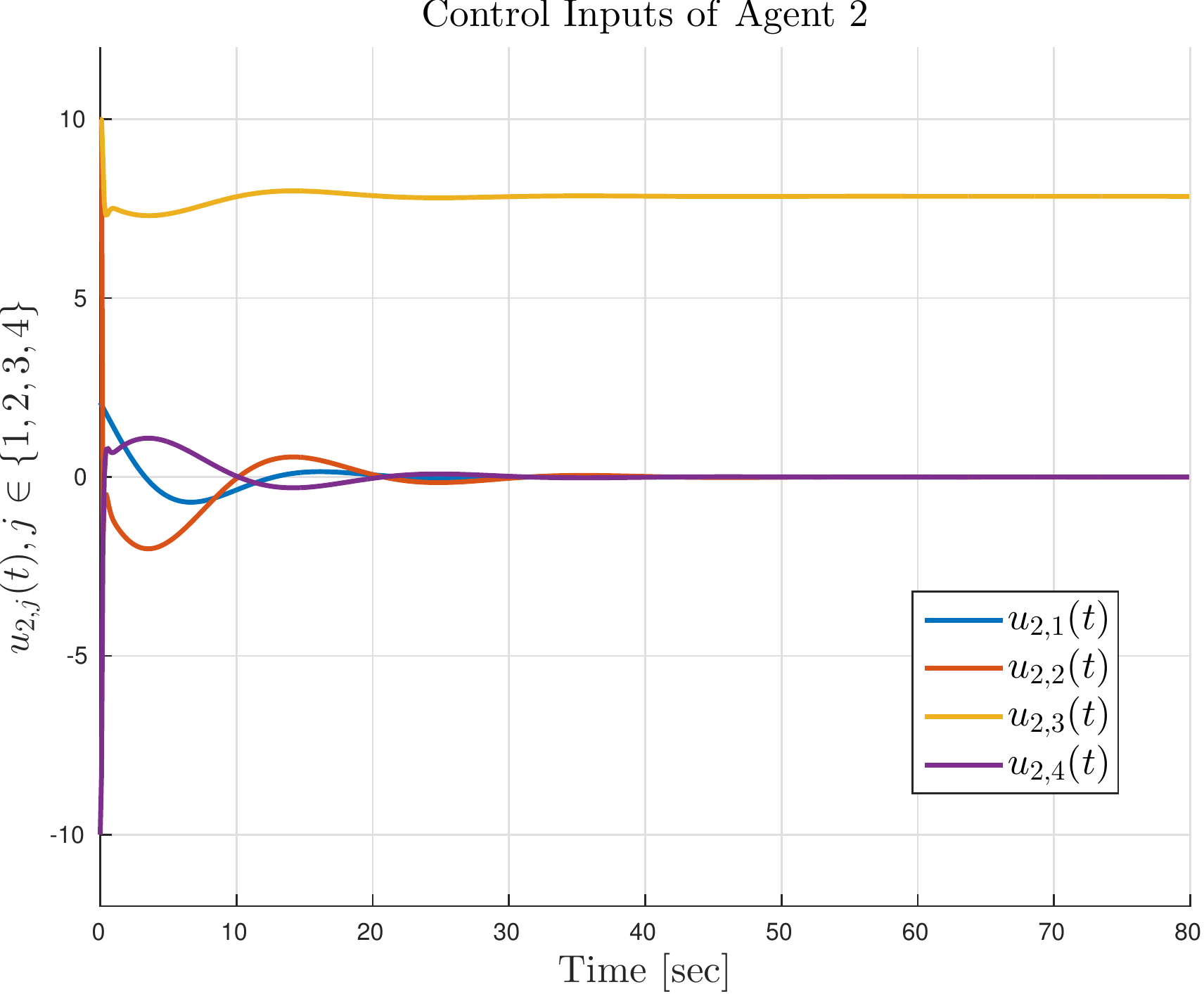}	
	\caption{The control inputs of the actuators of agent $2$.\label{fig:sim8}}
\end{figure} 

\textbf{Scenario 2:} Consider $N=3$ ground vehicles equipped with $2$ DOF manipulators, rigidly grasping an object with $n_1 = n_2 = n_3 = 4, n = n_1+n_2+n_3 = 12$. From \eqref{eq:main_system} we have that $x = [x_{\scriptscriptstyle O}^\top, v_{\scriptscriptstyle O}^\top, q^\top]^\top \in \mathbb{R}^{20}$, $u \in \mathbb{R}^{12}$, with $x_{\scriptscriptstyle O} = [p_{\scriptscriptstyle O}^\top, \phi_{\scriptscriptstyle O}]^\top \in \mathbb{R}^4$, $v_{\scriptscriptstyle O} = [\dot{p}_{\scriptscriptstyle O}^\top, \omega_{\scriptscriptstyle x_O}]^\top \in \mathbb{R}^4$, $p_{\scriptscriptstyle O} = [x_{\scriptscriptstyle O}, y_{\scriptscriptstyle O}, z_{\scriptscriptstyle O}]^\top \in \mathbb{R}^3$, $q = [q_1^\top, q_2^\top, q_3^\top]^\top \in \mathbb{R}^{12}$, $q_i = [p_{\scriptscriptstyle B_i}^\top, \alpha_i^\top]^\top \in \mathbb{R}^4$, $p_{\scriptscriptstyle B_i} = [x_{\scriptscriptstyle B_i}, y_{\scriptscriptstyle B_i}]^\top \in \mathbb{R}^2$, $\alpha_i = [\alpha_{i_1}, \alpha_{i_2}]^\top \in \mathbb{R}^2, i \in \{1,2\}$. The manipulators become singular when $\sin(\alpha_{i_1}) = 0 \}, i \in \{1,2,3\}$, thus the state constraints for the manipulators are set to:
\begin{align}
	\varepsilon < \alpha_{1_1} < \frac{\pi}{2}-\varepsilon, & -\frac{\pi}{2}+\varepsilon < \alpha_{1_2} < \frac{\pi}{2}-\varepsilon, \notag \\
	-\frac{\pi}{2} + \varepsilon < \alpha_{2_1} < -\varepsilon,& -\frac{\pi}{2}+\varepsilon < \alpha_{2_2} < \frac{\pi}{2}-\varepsilon. \notag
\end{align}
We also consider the input constraints:
\begin{equation*}
	-10 \le u_{i,j}(t) \le 10, i \in \{1, 2\}, j \in \{1,\dots,4\}.
\end{equation*}
The initial conditions are set to:
\begin{align*}
	x_{\scriptscriptstyle O}(0) &= \left[0, -2.2071, 0.9071, \frac{\pi}{2} \right]^\top, v_{\scriptscriptstyle O}(0) = \left[0, 0, 0, 0\right]^\top, \\
	q_{1}(0) &= \left[0.5, 0, \frac{\pi}{4}, \frac{\pi}{4}\right]^\top, q_{2}(0) = \left[0, -4.4142, -\frac{\pi}{4}, -\frac{\pi}{4}\right]^\top, \\
	q_{3}(0) &= \left[-0.5, 0, \frac{\pi}{4}, \frac{\pi}{4}\right]^\top.
\end{align*}
The desired goal states are set to: 
\begin{align*}
x_{\scriptscriptstyle O, \text{des}} &= \left[5, -2.2071, 0.9071, \frac{\pi}{2} \right]^\top, v_{\scriptscriptstyle O, \text{des}} = \left[0, 0, 0, 0\right]^\top, \\
q_{1, \text{des}} &= \left[5.5, 0, \frac{\pi}{4}, \frac{\pi}{4}\right]^\top, q_{2, \text{des}} = \left[5, -4.4142, -\frac{\pi}{4}, -\frac{\pi}{4}\right]^\top, \\
q_{3, \text{des}} &= \left[4.5, 0.0, \frac{\pi}{4}, \frac{\pi}{4}\right]^\top.
\end{align*}  
The sampling time is $h = 0.1 \sec$, the horizon is set to $T_p = 0.5 \sec$, and the total simulation time is $100 \sec$; The matrices $P, Q, R$ are set to:
\begin{equation}
	P = Q = 0.5  I_{20 \times 20}, R = 0.5 I_{12 \times 12}. \notag 
\end{equation}
The simulation results are depicted in Fig. \ref{fig:sim9}- Fig. \ref{fig:sim16},  which shows that the states of the agents as well as the states of the object converge to the desired ones while guaranteeing that all state and input constraints are met. The simulation scenarios were carried out by using the NMPC toolbox given in \cite{grune_2011_nonlinear_mpc} and they took $23500 \sec$, $45547 \sec$ for Scenario $1$ and Scenario $2$, respectively, in MATLAB Environment on a desktop with $8$ cores, $3.60$ GHz SPU and $16$GB of RAM.

\section{Conclusions and Future Work} \label{sec:conclusions}

In this work we proposed a NMPC scheme for the cooperative transportation of an object rigidly grasped by $N$ robotic agents. The proposed control scheme deals with singularities of the agents, inter-agent collision avoidance as well as collision avoidance between the agents and the object with the workspace obstacles. We proved the feasibility and  convergence analysis of the proposed methodology and simulation results verified the efficiency of the approach. Future efforts will be devoted towards including load sharing coefficients, internal force regulation, and complete decentralization of the proposed method. Finally, we will try to decrease the overall complexity and carry out real-time experiments.

\begin{figure}[t!]
	\centering
	\includegraphics[width = 0.6\textwidth]{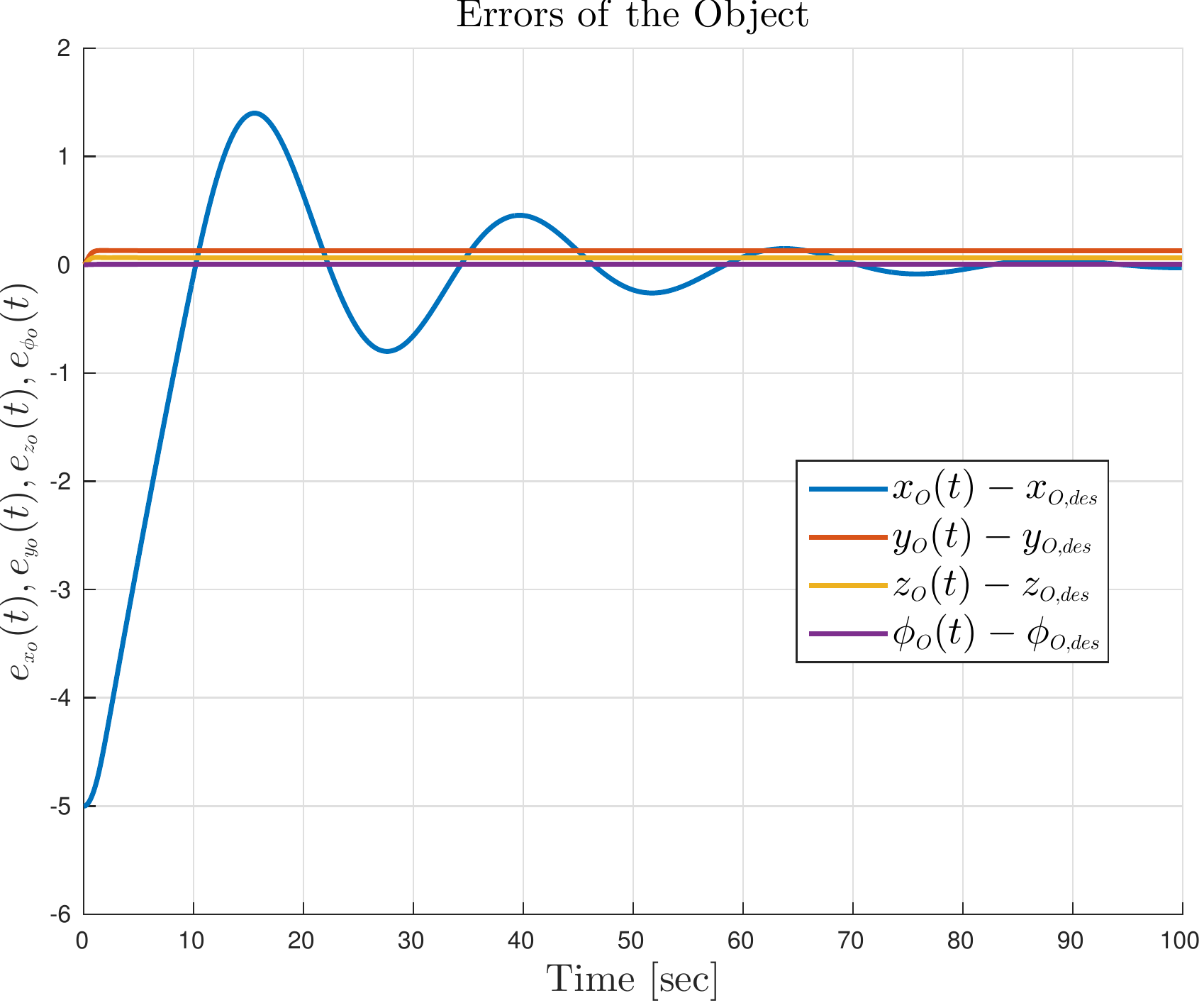}
	\caption{The errors of the object.\label{fig:sim9}}
\end{figure} 

\begin{figure}[t!]
	\centering
	\includegraphics[width = 0.6\textwidth]{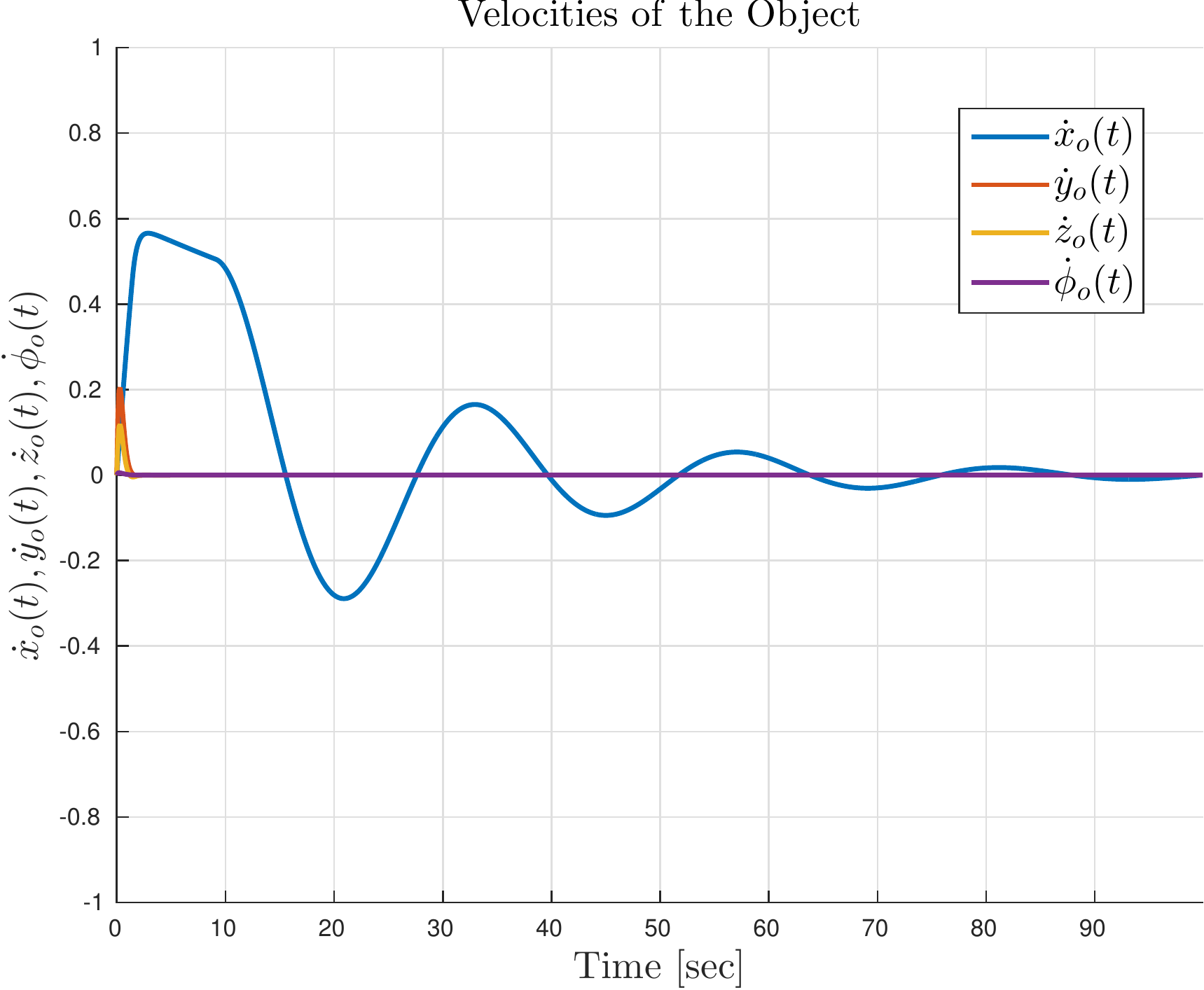}
	\caption{The velocities of the object.\label{fig:sim10}}
\end{figure} 

\begin{figure}[t!]
	\centering
	\includegraphics[width = 0.6\textwidth]{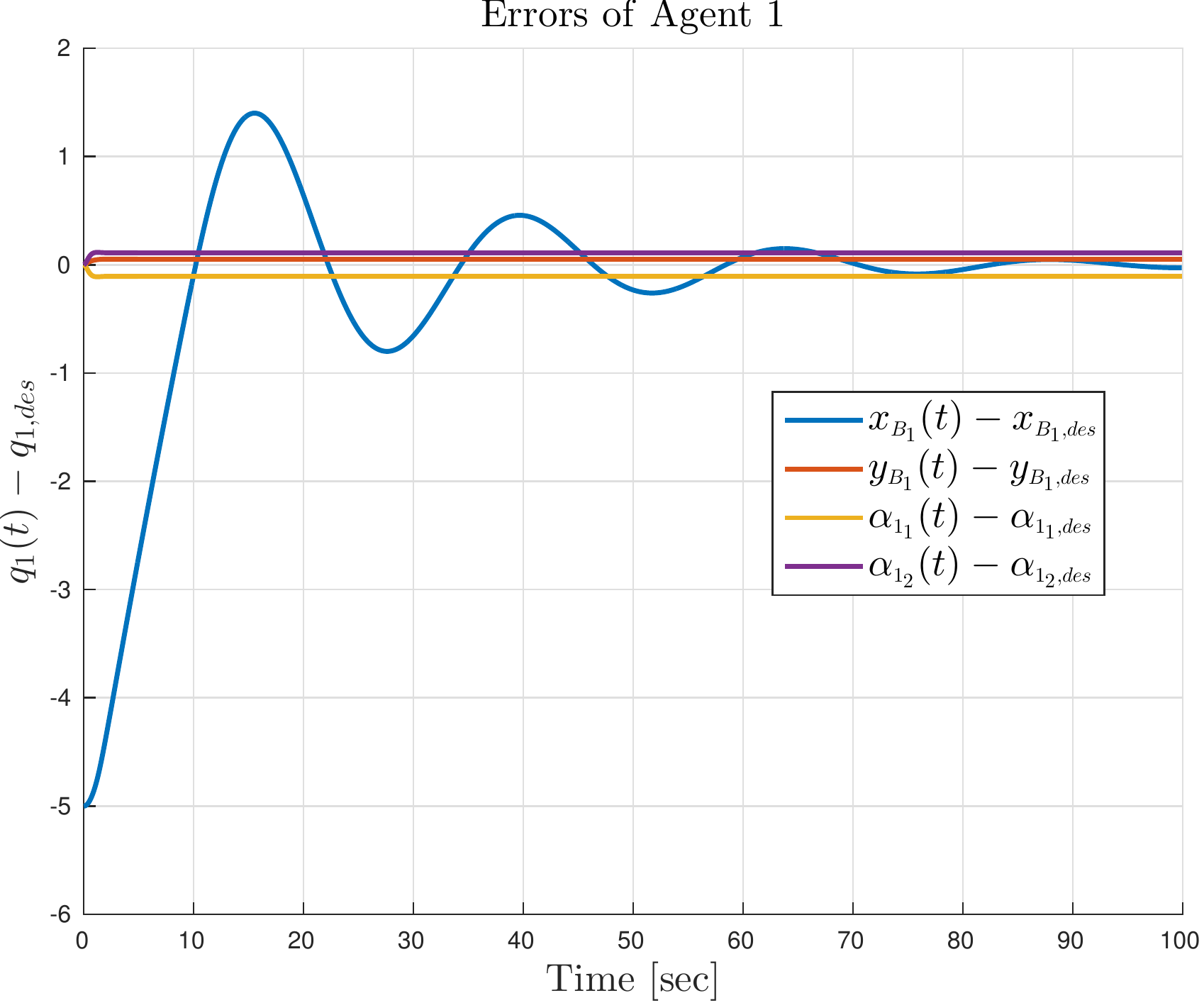}	
	\caption{The errors of vehicle $1$ as well as the errors of the manipulator.\label{fig:sim11}}
\end{figure} 

\begin{figure}[t!]
	\centering
	\includegraphics[width = 0.6\textwidth]{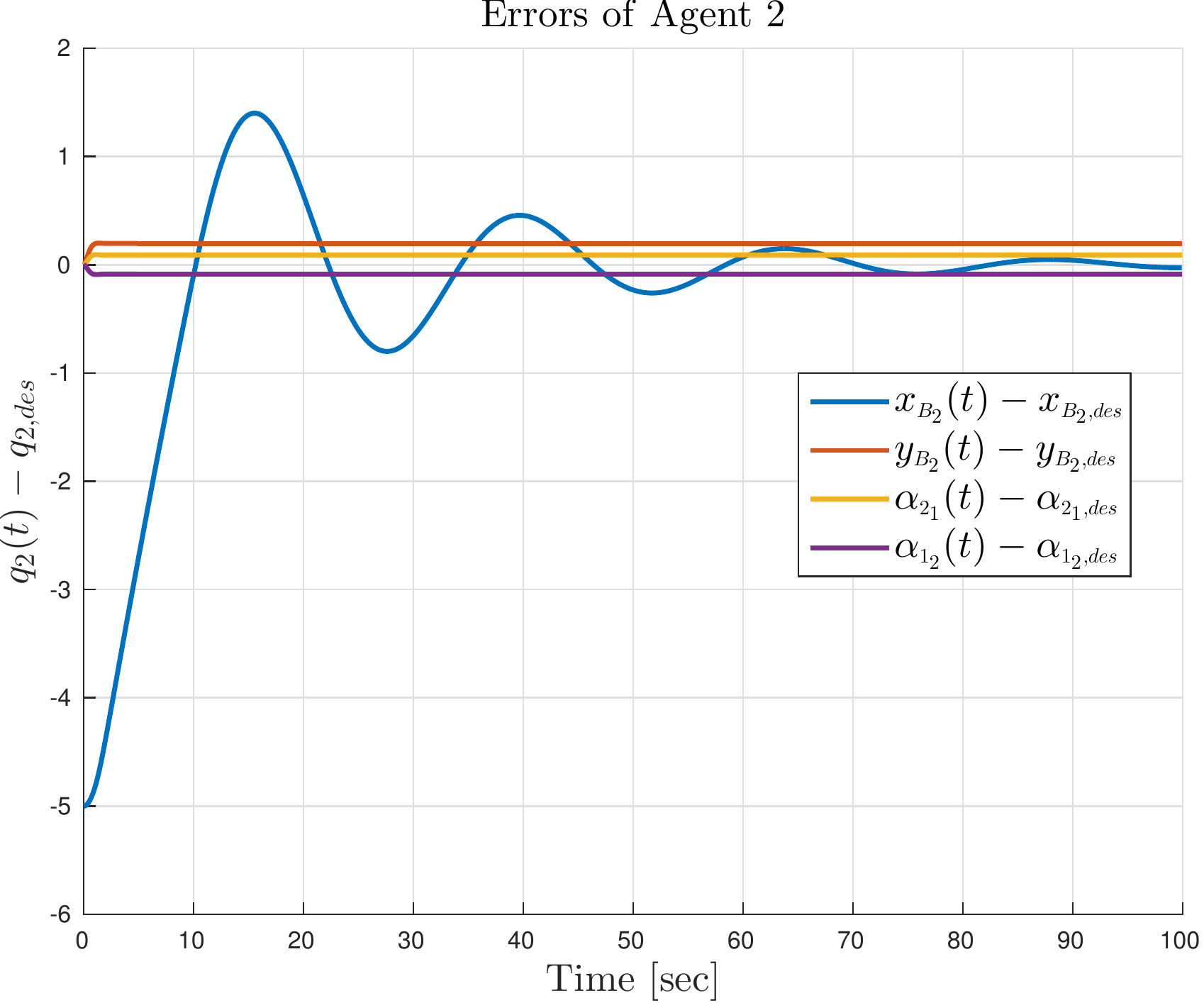}	
	\caption{The errors of vehicle $2$ as well as the errors of the manipulator.\label{fig:sim12}}
\end{figure} 

\begin{figure}[t!]
	\centering
	\includegraphics[width = 0.6\textwidth]{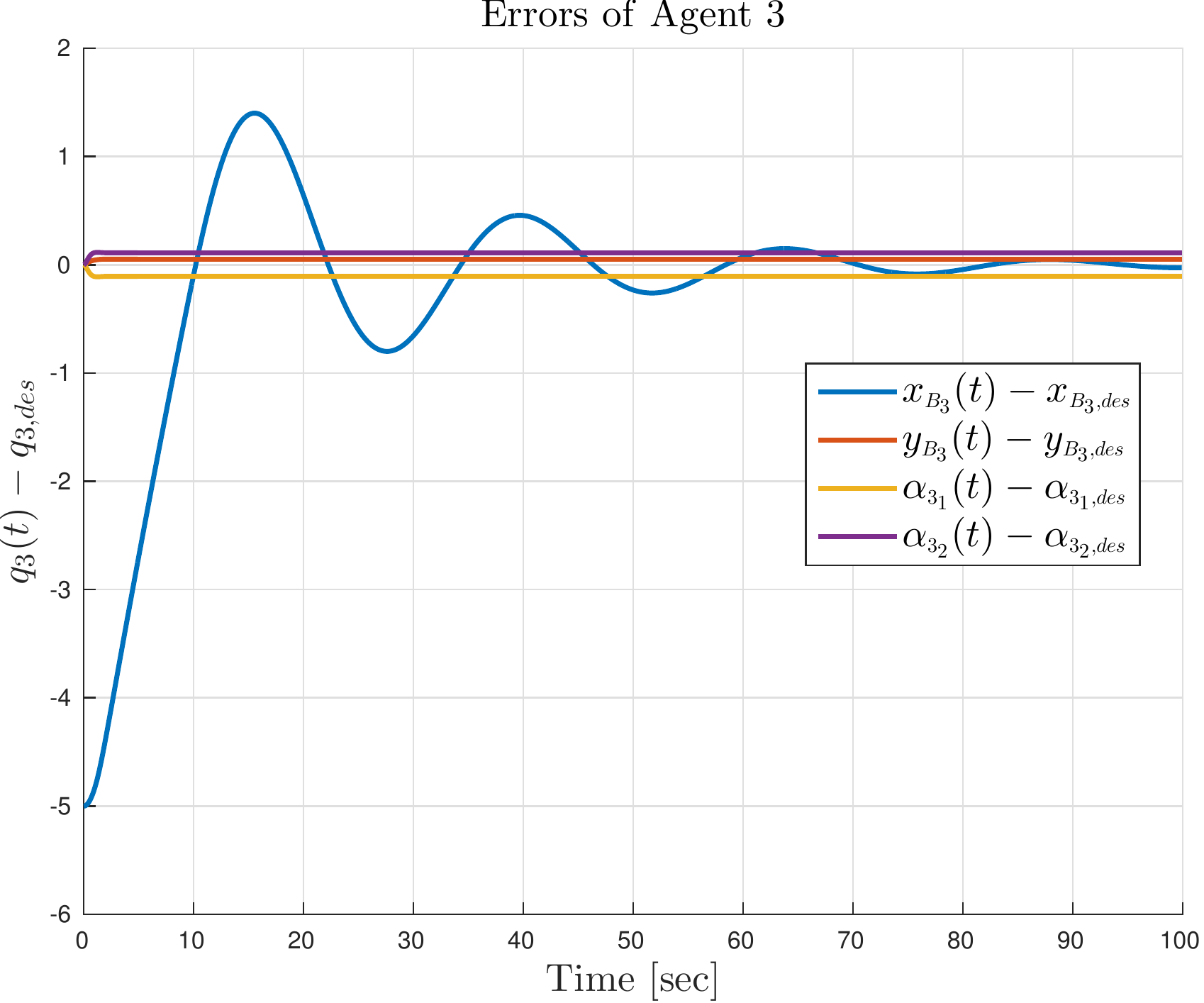}	
	\caption{The errors of vehicle $3$ as well as the errors of the manipulator.\label{fig:sim13}}
\end{figure} 

\begin{figure}[t!]
	\centering
	\includegraphics[width = 0.6\textwidth]{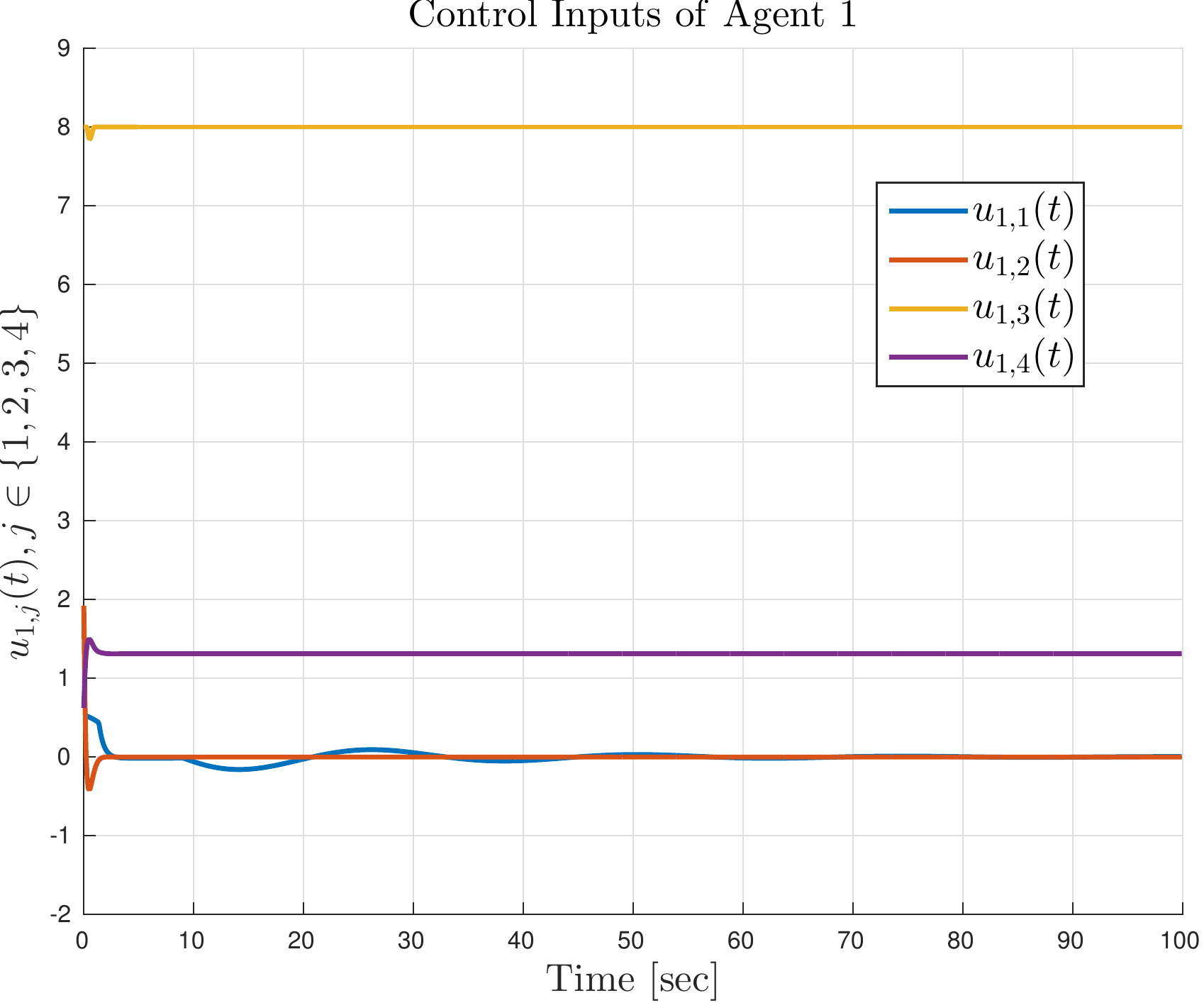}
	\caption{The control inputs of the actuators of agent $1$.\label{fig:sim14}}
\end{figure} 

\begin{figure}[t!]
	\centering
	\includegraphics[width = 0.6\textwidth]{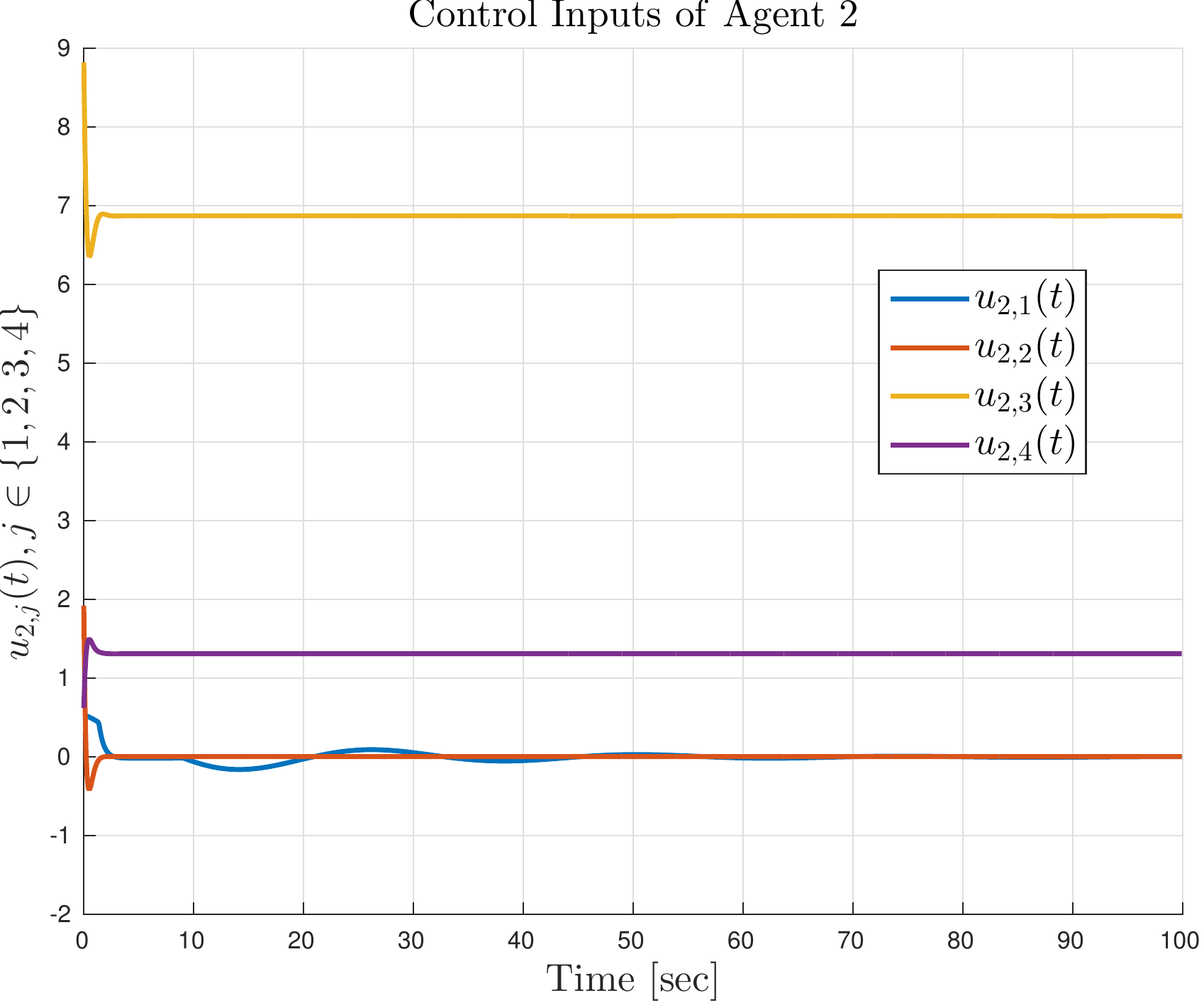}
	\caption{The control inputs of the actuators of agent $3$.\label{fig:sim15}}
\end{figure} 

\begin{figure}[t!]
	\centering
	\includegraphics[width = 0.6\textwidth]{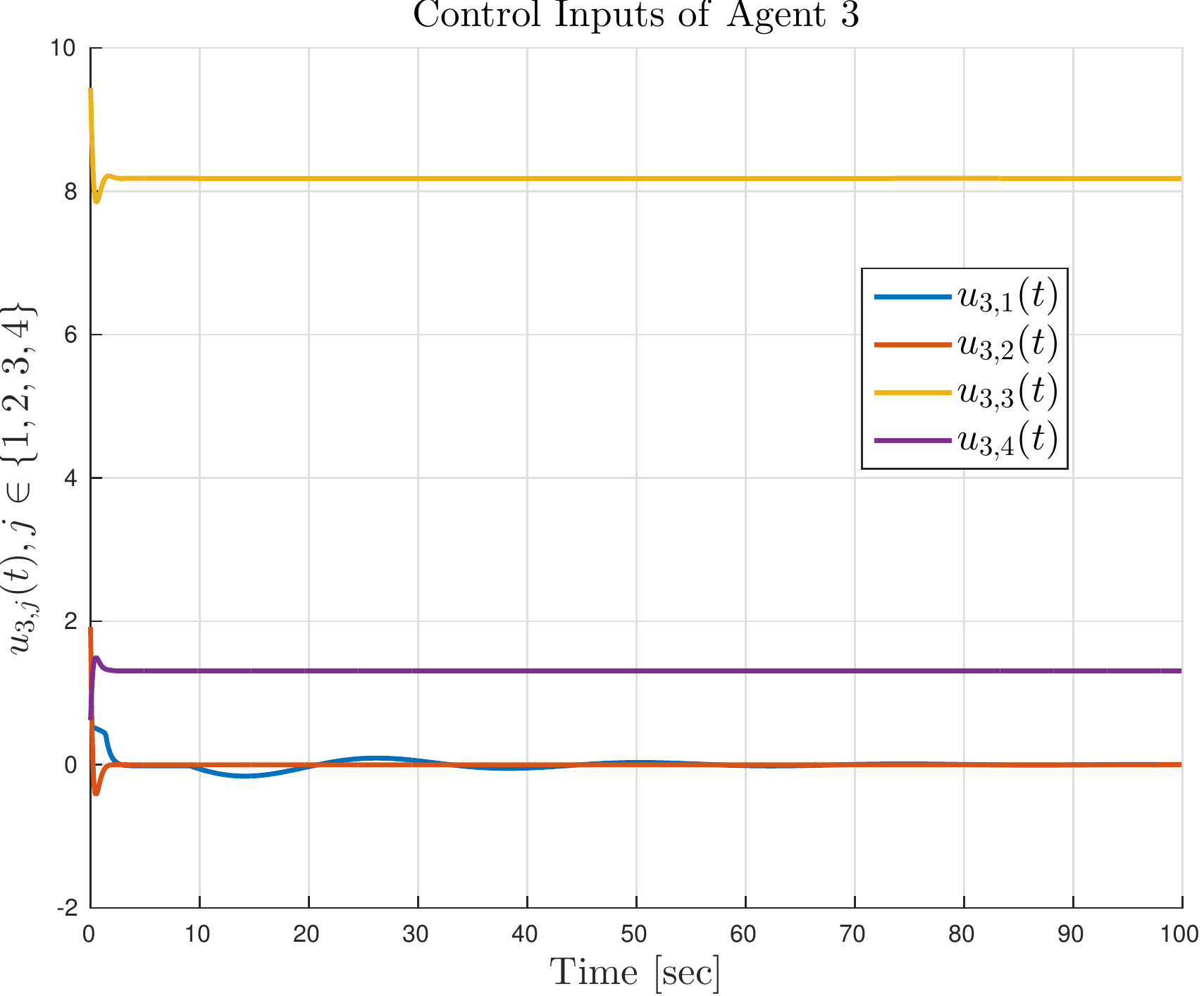}
	\caption{The control inputs of the actuators of agent $3$.\label{fig:sim16}}
\end{figure} 

\appendices

\section{Proof of Lemma \ref{lemma:F_i_bounded_K_class}} \label{app:proof_lemma_1}

\noindent By invoking the fact that:
\begin{align} \label{eq:rayleigh_inequality}
\lambda_{\min}(P) \|y\|^2 \le y^\top P y \le \lambda_{\max}(P) \|y\|^2, \forall y \in \mathbb{R}^n, P \in \mathbb{R}^{n \times n}, P = P^\top > 0,
\end{align}
we have:
\begin{align*}
e^\top Q e + u^\top R u & \le \lambda_{\max}(Q) \|e\|^2 + \lambda_{\max}(R) \|u\|^2  = \max \{\lambda_{\max}(Q), \lambda_{\max}(R) \} \|z\|^2, \notag
\end{align*}
and:
\begin{align*}
e^\top Q e + u^\top R u & \ge \lambda_{\min}(Q) \|e\|^2 + \lambda_{\min}(R) \|u\|^2 \notag \\
& = \min \{\lambda_{\min}(Q), \lambda_{\min}(R) \} \|z\|^2, \notag
\end{align*}
where $z = \left[ e^\top, u^\top\right]^\top$. Thus, we get:
\begin{align*}
\min \{\lambda_{\min}(Q), \lambda_{\min}(R) \} \|z\|^2 & \le e^\top Q e + u^\top R u \le \max \{\lambda_{\max}(Q), \lambda_{\max}(R) \} \|z\|^2.
\end{align*}
By defining the $\mathcal{K}_{\infty}$ functions $\alpha_1$, $\alpha_2 : \mathbb{R}_{\ge 0}  \to \mathbb{R}_{\ge 0}$:
\begin{align*}
\alpha_1(y) \triangleq m \|y \|^2, \alpha_2(y) \triangleq \max \{\lambda_{\max}(Q), \lambda_{\max}(R) \} \|y \|^2,
\end{align*}
and the parameter $m \in \mathbb{R}_{>0}$ by:
\begin{align} \label{eq:mi}
m \triangleq \min \{\lambda_{\min}(Q), \lambda_{\min}(R) \},
\end{align}
we get:
\begin{align} \label{eq:F_lower_bound}
\alpha_1\big(\|z\|\big) \leq F \big(e, u\big) \leq \alpha_2\big(\|z\|\big),
\end{align}
which leads to the conclusion of the proof. \qed

\section{Proof of Lemma 2} \label{app:lemma_2}

\begin{proof}
	\noindent For every $e(t) \in \mathcal{E}_f$, the following holds:
	\begin{align}
	|V(e_1)-V(e_2) | & = |e_1^\top P e_1 - e_2^\top P e_2 | \notag \\
	& = |e_1^\top P e_1 +e_1^\top P e_2 -e_1^\top P e_2 - e_2^\top P e_2 | \notag \\
	& = |e_1^\top P (e_1-e_2) - e_2^\top P ( e_1 - e_2) | \notag \\
	& \le |e_1^\top P (e_1-e_2)| + |e_2^\top P ( e_1 - e_2) |. \label{eq:lemma_1_proof_step_1}
	\end{align}
	By employing the property that:
	\begin{equation*}
	|x^\top A y| \le \sigma_{\max}(A) \|x\| \|y\|, \forall \ x,y \in \mathbb{R}^n, A \in \mathbb{R}^{n \times n},
	\end{equation*}
	\eqref{eq:lemma_1_proof_step_1} is written as:
	\begin{align*}
	|V(e_1)-V(e_2) | &\le \sigma_{\max}(P) \|e_1\| \|e_1-e_2\| +\sigma_{\max}(P) \|e_2\| \|e_1-e_2\| \notag \\
	&= \sigma_{\max}(P) (\|e_1\| + \|e_2\|) \|e_1-e_2\| \notag \\
	&\le \sigma_{\max}(P) (\varepsilon_0 + \varepsilon_0) \|e_1-e_2\| \notag \\ 
	&= \left[ 2 \varepsilon_0 \sigma_{\max}(P) \right] \|e_1-e_2\|.
	\end{align*}
	which completes the proof.
\end{proof}

\section{Feasibility Analysis} \label{app:feasibility}

	Consider any sampling time instant $t_i$ for which a solution exists. In between $t_i$ and $t_{i+1}$, the optimal control input $\hat{u}^\star (s; e(t_i)), s \in [t_i, t_{i+1})$ is implemented. According to \eqref{eq:predicted_state_relation}, it holds that:
	\begin{equation*}
	e(t_{i+1}) = \hat{e}(t_{i+1}; \hat{u}^\star (\cdot; e(t_i)), e(t_i)).
	\end{equation*}
	The remaining piece of the optimal control input $\hat{u}^\star (s; e(t_i)), s \in [t_{i+1}, t_i+T_p]$ satisfies the state and input constraints $E, U$, respectively. Furthermore, 
	\begin{equation*}
	\hat{e}(t_{i}+T_p; \hat{u}^\star (\cdot; e(t_i)), e(t_i)) \in \mathcal{E}_f,
	\end{equation*}
	and we know from Assumption 2b of Theorem 1 that for all $e(t) \in \mathcal{E}_f$, there exists at least one control input $u_f(\cdot)$ that renders the set $\mathcal{E}_f$ invariant over $h$. Picking any such input, a feasible control input $\bar{u}(\cdot; e(t_{i+1}))$, at time instant $t_{i+1}$, may be the following:
	\begin{align} \label{eq:u_bar_feas}
	&\bar{u}(s; e(t_{i+1})) = \begin{cases}
	\hat{u}^\star (s; e(t_i)), \qquad & s \in [t_{i+1}, t_i+T_p], \\
	u_f (\hat{e}(t_i+T_p; u^\star(\cdot), e(t_i))), & s \in [t_{i}+T_p, t_{i+1}+T_p]. 
	\end{cases}
	\end{align}
	Thus, from feasibility of $\hat{u}^\star(s, e(t_i))$ and the fact that $u_f(e(t)) \in U$, for all $e(t) \in \mathcal{E}_f$, it follows that:
	\begin{equation*}
	\bar{u}(s; e(t_{i+1})) \in U, \forall \ s \in [t_{i+1}, t_i+T_p].
	\end{equation*}
	Hence, the feasibility at time $t_i$ implies feasibility at time $t_{i+1}$. Therefore, if the OCP \eqref{eq:mpc_minimazation} - \eqref{eq:mpc_terminal_set} is feasible at time $t = 0$, it remains feasible for every $t \ge 0$.

\section{Convergence Analysis} \label{app:convergence_analysis}

The second part involves proving convergence of the state $e$ in the terminal set $\mathcal{E}_f$. In order to prove this, it must be shown that a proper value function is decreasing along the solution trajectories starting at a sampling time $t_i$. Consider the optimal value function $J^\star(e(t_i))$, as is given in \eqref{eq:J_star}. Consider also the cost of the feasible control input, indicated by:
\begin{equation} \label{eq:J_bar}
\bar{J}(e(t_{i+1})) \triangleq \bar{J}(e(t_{i+1}), \bar{u}(\cdot; e(t_{i+1}))),
\end{equation}
where $t_{i+1} = t_i + h$, as is given in \eqref{eq:t_i_equals_h}. Define:
\begin{align}
u_1(s) &= \bar{u}(s; e(t_{i+1})), \label{eq:u_1} \\
e_1(s)  &= \bar{e}(s; u_1(s), e(t_{i+1})), s > t_{i+1}. \notag
\end{align}
$e_1(s)$ stands for the predicted state $e$ at time $s$, based on the measurement of the state $e$ at time $t_{i+1}$, while using the feasible control input $\bar{u}(s; e(t_{i+1}))$. Let us also define th the following terms:
\begin{align}
u_2(s) &= \hat{u}^\star(s; e(t_{i})), \label{eq:u_2}\\
e_2(s)  &= \hat{e}(s; u_2(s), e(t_{i})), s > t_{i+1}. \notag
\end{align}
\eqref{eq:u_1}, \eqref{eq:u_2} form convenient notations for the readability of the proof hereafter.

By employing \eqref{eq:mpc_minimazation}, \eqref{eq:J_star} and \eqref{eq:J_bar}, the difference between the optimal and feasible cost is given by:
\begin{align}
&\bar{J}(e(t_{i+1})) - J^\star(e(t_i)) = V(e_1(t_{i+1}+T_p)) + \int_{t_{i+1}}^{t_{i+1}+T_p} \Big[ F(e_1(s), u_1(s)) \Big] ds \notag \\
&\hspace{8mm}-V(e_2(t_{i}+T_p)) - \int_{t_{i}}^{t_{i}+T_p} \Big[ F(e_2(s), u_2(s)) \Big] ds \notag \\
&=V(e_1(t_{i+1}+T_p)) + \int_{t_{i+1}}^{t_{i}+T_p} \Big[ F(e_1(s), u_1(s)) \Big] ds \notag \\ 
&+\int_{t_{i}+T_p}^{t_{i+1}+T_p} \Big[ F(e_1(s), u_1(s)) \Big] ds -V(e_2(t_{i}+T_p)) \notag \\
&\hspace{15mm}- \int_{t_{i}}^{t_{i+1}} \Big[ F(e_2(s), u_2(s)) \Big] ds -\int_{t_{i+1}}^{t_{i}+T_p} \Big[ F(e_2(s), u_2(s)) \Big] ds. \label{eq:lyap1}
\end{align}
Note that, from \eqref{eq:u_bar_feas}, the following holds:
\begin{equation} \label{eq:verify_u_bar}
\bar{u}(s; e(t_{i+1})) = \hat{u}^\star(s; e(t_i)), \forall \ s \in [t_{i+1}, t_i+T_p].
\end{equation}
By combining \eqref{eq:u_1}, \eqref{eq:u_2} and \eqref{eq:verify_u_bar}, it yields that:
\begin{equation} \label{eq:u_1_equals_u_2}
u_1(s) = u_2(s) = \bar{u}(s), \forall \ s \in [t_{i+1}, t_i+T_p],
\end{equation}
which implies that:
\begin{equation} \label{eq:e_1_equals_e_2}
e_1(s) = e_2(s), \forall \ s \in [t_{i+1}, t_i+T_p].
\end{equation}
The combination of \eqref{eq:u_1_equals_u_2} and \eqref{eq:e_1_equals_e_2} implies that:
\begin{equation*}
F(e_1(s), u_1(s)) = F(e_1(s), u_1(s)), \forall \ s \in [t_{i+1}, t_i+T_p].
\end{equation*}
which implies that:
\begin{align} 
&\int_{t_{i+1}}^{t_{i}+T_p} \Big[ F(e_1(s), u_1(s)) \Big] ds = \int_{t_{i+1}}^{t_{i}+T_p} \Big[ F(e_2(s), u_2(s)) \Big] ds. \label{eq:F_1_equals_F_2}
\end{align}
By employing \eqref{eq:F_1_equals_F_2}, \eqref{eq:lyap1} becomes:
\begin{align}
\bar{J}(e(t_{i+1})) - J^\star(e(t_i)) = \ & V(e_1(t_{i+1}+T_p))  + \int_{t_{i}+T_p}^{t_{i+1}+T_p} \Big[ F(e_1(s), u_1(s)) \Big] ds \notag \\
& -V(e_2(t_{i}+T_p)) - \int_{t_{i}}^{t_{i+1}} \Big[ F(e_2(s), u_2(s)) \Big] ds. \label{eq:lyap2}
\end{align}
Due to the fact that $t_{i+1}+T_p-(t_i+T_p) = t_{i+1}-t_i= h$, and the Assumption 2b of Theorem 1 holds for one sampling period $h$, by integrating this inequality from $t_i+T_p$ to $t_{i+1}+T_p$ and we get the following:
\begin{align}
&\hspace{-3mm} \int_{t_{i}+T_p}^{t_{i+1}+T_p} \Big[ \frac{\partial V}{\partial{e}} f_e(e_1(s), u_1(s)) + F(e_1(s), u_1(s)) \Big] ds \le 0 \notag \\ 
\Leftrightarrow & \int_{t_{i}+T_p}^{t_{i+1}+T_p} \Big[ \dot{V}(e_1(s)) \Big] ds  +\int_{t_{i}+T_p}^{t_{i+1}+T_p} \Big[F(e_1(s), u_1(s)) \Big] ds \le 0 \notag \\
\Leftrightarrow & V(e_1(t_{i+1}+T_p)) -V(e_1(t_{i}+T_p))  + \int_{t_{i}+T_p}^{t_{i+1}+T_p} \Big[F(e_1(s), u_1(s)) \Big] ds \le 0 \notag \\
\Leftrightarrow & V(e_1(t_{i+1}+T_p)) -V(e_1(t_{i}+T_p)) + \int_{t_{i}+T_p}^{t_{i+1}+T_p} \Big[F(e_1(s), u_1(s)) \Big] ds \le \notag \\ 
&\hspace{25mm} V(e_2(t_{i}+T_p)) - V(e_2(t_{i}+T_p)) \notag \\
\Leftrightarrow & V(e_1(t_{i+1}+T_p))  +  \int_{t_{i}+T_p}^{t_{i+1}+T_p} \Big[F(e_1(s), u_1(s)) \Big] ds -V(e_2(t_{i}+T_p)) \le \notag \\ 
&\hspace{22mm} V(e_1(t_{i}+T_p)) - V(e_2(t_{i}+T_p)). \notag 
\end{align}
By employing the property $y \le |y|, \forall y \in \mathbb{R}$, we get:
\begin{align}
& V(e_1(t_{i+1}+T_p))  + \int_{t_{i}+T_p}^{t_{i+1}+T_p} \Big[F(e_1(s), u_1(s)) \Big] ds -V(e_2(t_{i}+T_p)) \le \notag \\ 
&\hspace{15mm} \left| V(e_1(t_{i}+T_p)) - V(e_2(t_{i}+T_p)) \right|. \label{eq:V_e_1-V_e_2}
\end{align}
By employing Lemma 2, we have that:
\begin{align}
&\left| V(e_1(t_{i}+T_p)) - V(e_2(t_{i}+T_p)) \right| \le  L_V \|e_1(t_{i}+T_p) - e_2(t_{i}+T_p) \|. \label{eq:lip_e_1_e_2_t_i}
\end{align}
By combining \eqref{eq:V_e_1-V_e_2} and \eqref{eq:lip_e_1_e_2_t_i} we get:
\begin{align}
& V(e_1(t_{i+1}+T_p)) + \int_{t_{i}+T_p}^{t_{i+1}+T_p} \Big[F(e_1(s), u_1(s)) \Big] ds -V(e_2(t_{i}+T_p))  \le \notag \\ 
&\hspace{25mm} L_V \|e_1(t_{i}+T_p) - e_2(t_{i}+T_p) \| \label{eq:norm_V_e_1-V_e_2}
\end{align}
For $s = t_i+T_p$, \eqref{eq:e_1_equals_e_2} gives:
\begin{equation} \label{eq:e_1_equals_e_2_t_p}
e_1(t_i+T_p) = e_2(t_i+T_p).
\end{equation}
By combining \eqref{eq:e_1_equals_e_2_t_p} and \eqref{eq:norm_V_e_1-V_e_2} we have:
\begin{align}
&\hspace{-2.3mm}V(e_1(t_{i+1}+T_p))  + \notag \\ 
&\hspace{-2.3mm}\int_{t_{i}+T_p}^{t_{i+1}+T_p} \Big[F(e_1(s), u_1(s)) \Big] ds -V(e_2(t_{i}+T_p)) \le 0. \label{eq:norm_V_e_1-V_e_2_ineq_zero}
\end{align}
By combining \eqref{eq:lyap2} with \eqref{eq:norm_V_e_1-V_e_2_ineq_zero}, the following holds:
\begin{align}
&\bar{J}(e(t_{i+1})) - J^\star(e(t_i)) \le - \int_{t_{i}}^{t_{i+1}} \Big[ F(e_2(s), u_2(s)) \Big] ds. \label{eq:lyap3}
\end{align}
By substituting $e = e_2(s), u = u_2(s)$ in \eqref{eq:F_lower_bound} we get:
\begin{equation*}
F(e_2(s),u_2(s)) \ge m \|z_2(s)\|^2,
\end{equation*}
where $z_2\triangleq [e_2, u_2]^\top$. The latter is equivalent to:
\begin{align}
&\int_{t_{i}}^{t_{i+1}} \Big[ F(e_2(s),u_2(s)) \Big] ds \ge m \int_{t_{i}}^{t_{i+1}} \|z_2(s)\|^2  ds \notag \\
\Leftrightarrow &-\int_{t_{i}}^{t_{i+1}} \Big[ F(e_2(s),u_2(s)) \Big] ds \le -m \int_{t_{i}}^{t_{i+1}} \|z_2(s)\|^2  ds. \label{eq:int_f_e_2}
\end{align}
By combining \eqref{eq:lyap3} and \eqref{eq:int_f_e_2} we finally get:
\begin{align}
&\bar{J}(e(t_{i+1})) - J^\star(e(t_i)) \le -m \int_{t_{i}}^{t_{i+1}} \|z_2(s)\|^2  ds. \label{eq:lyap4}
\end{align}
It is clear that the optimal solution at time $t_{i+1}$ i.e., $J^\star(e(t_{i+1}))$  will not be worse than the feasible one at the same time i.e. $\bar{J}(e(t_{i+1}))$. Therefore, \eqref{eq:lyap4} implies:
\begin{equation}
J^\star(e(t_{i+1})) - J^\star(e(t_i)) \le -m \int_{t_{i}}^{t_{i+1}} \|z_2(s)\|^2s \le 0, \label{eq:lyap5}
\end{equation}
or, by using the fact that $\displaystyle \int_{t_0}^{t_{i}} \|z_2(s)\|^2ds$ $= \displaystyle \sum_{j = 0}^{i-1} \int_{t_j}^{t_{j+1}} \|z_2(s)\|^2ds$, equivalently, we obtain:
\begin{align}
& J^\star(e(t_{i+1})) - J^\star(e(t_i)) \le -m \int_{t_0}^{t_{i+1}} \|z_2(s)\|^2ds+ m \sum_{j = 0}^{i-1} \int_{t_j}^{t_{j+1}} \|z_2(s)\|^2ds. \label{eq:lyap6}
\end{align}
By using induction and the fact that $t_i = h \cdot i, t_{i+1} = h \cdot (i+1), \forall i \ge 0$, from \eqref{eq:t_i_equals_h}, \eqref{eq:lyap6} is written as:
\begin{equation}
J^\star(e(t_{i})) - J^\star(e(t_0)) \le -m \int_{t_0}^{t_{i}} \|z_2(s)\|^2ds. \label{eq:lyap7}
\end{equation}
Since $t_0 = 0$ we obtain:
\begin{equation}
J^\star(e(t_{i})) \le J^\star(e(0)) -m \int_{0}^{t_{i}} \|z_2(s)\|^2ds. \label{eq:lyap8}
\end{equation}
which implies that:
\begin{equation}
J^\star(e(t_{i})) \le J^\star(e(0)). \label{eq:lyap9}
\end{equation}
By combining \eqref{eq:lyap5}, \eqref{eq:lyap9}, we obtain:
\begin{equation}
J^\star(e(t_{i+1})) \le J^\star(e(t_{i})) \le  J^\star(e(0)), \forall \ t_i = i\cdot h, i \ge 0. \label{eq:lyap10}
\end{equation}
Therefore, the value function $J^\star(e(t_i))$ has proven to be non-increasing for all the sampling times. Let us define the function:
\begin{equation} \label{eq:V_function}
V(e(t)) = J^\star (e(s)) \le J^\star(e(0)), t \in\mathbb{R}_{\ge 0},
\end{equation}
where $s = \max \{ t_i : t_i \le t\}$. Since $J^\star(e(0))$ is bounded, \eqref{eq:V_function} implies that $V(e(t))$ is bounded. Since the signals $e(t), u(t)$ are bounded ($e(t) \in E, u(t) \in U$), according to \eqref{eq:error_dynamics}, it holds that $\dot{e}(t)$ is also bounded. From \eqref{eq:lyap8} we have that:
\begin{equation*}
V(e(t)) = J^\star (e(s)) \le J^\star(e(0)) -m \int_{0}^{s} \|z_2(s)\|^2ds.
\end{equation*}
which due to the fact that $s \le t$, is equivalent to:
\begin{equation} \label{eq:lyap11}
V(e(t)) \le J^\star(e(0)) -m \int_{0}^{t} \|z_2(s)\|^2ds, t \in\mathbb{R}_{\ge 0}.
\end{equation}
From \eqref{eq:lyap11}, we get:
\begin{equation} \label{eq:lyap12}
\int_{0}^{t} \|z_2(s)\|^2ds  \le \frac{1}{m} \left[ J^\star(e(0)) - V(e(t)) \right], t \in\mathbb{R}_{\ge 0}.
\end{equation}
Since $J^\star(e(0)), V(e(t))$ has been proven to be bounded, the term $\displaystyle \int_{0}^{t} \|z_2(s)\|^2ds$ is also bounded. Therefore, by employing Lemma 1, we have that $\|z_2(t)\|  \to 0$, as $t \to \infty$. The latter implies that:
\begin{equation*}
\lim_{t \to \infty} \|e(t)\| = 0 \Rightarrow e(t) \in \mathcal{E}_f, \ \text{as} \ t \to \infty,
\end{equation*}
and leads to the conclusion of the proof.

\bibliography{references}
\bibliographystyle{ieeetr}
\end{document}